\DeclareMathOperator{\doop}{\textit{do}}
\DeclareMathOperator{\pa}{pa}
\DeclareMathOperator{\ch}{ch}
\newcommand{\scope}[1]{\mathbf{sc}(#1)}
\newtheorem{defin}{Definition}
\newtheorem{prop}{Proposition}
\title{Interventional Sum-Product Networks:\\ Causal Inference with Tractable Probabilistic Models}
\author{\textbf{Matej Zečević}\textsuperscript{\rm 1} \quad \textbf{Devendra Singh Dhami}\textsuperscript{\rm 1} \quad \textbf{Athresh Karanam}\textsuperscript{\rm 2} \\ \vspace{.5cm} \textbf{Sriraam Natarajan}\textsuperscript{\rm 2} \quad \textbf{Kristian Kersting}\textsuperscript{\rm 1,3}}
\affil{\textsuperscript{\rm 1}Computer Science Department, TU Darmstadt \\
\textsuperscript{\rm 2}Computer Science Department, The University of Texas at Dallas \\
\textsuperscript{\rm 3}Centre for Cognitive Science, TU Darmstadt, and Hessian Center for AI (hessian.AI)\\\texttt{\{matej.zecevic, devendra.dhami, kersting\}@cs.tu-darmstadt.de}\\ \texttt{\{bxk180004,sriraam.natarajan\}@utdallas.edu}}
\begin{document}

\maketitle

\begin{abstract}
While probabilistic models are an important tool for studying causality, doing so suffers from the intractability of inference. As a step towards tractable causal models, we consider the problem of learning interventional distributions using sum-product networks (SPNs) that are over-parameterized by gate functions, e.g., neural networks. Providing an arbitrarily intervened causal graph as input, effectively subsuming Pearl's $\doop$-operator, the gate function predicts the parameters of the SPN. The resulting \textit{interventional} SPNs are motivated and illustrated by a structural causal model themed around personal health. Our empirical evaluation against competing methods from both generative and causal modelling demonstrates that interventional SPNs indeed are both expressive and causally adequate.
\end{abstract}

\section{Introduction}\label{sec:intro}
Identifying causal relationships between variables in observational data is one of the fundamental and well-studied problem in machine learning. There have been several great strides in causality \citep{granger1969investigating,pearl2009causality,bareinboim2016causal} over the years specifically characterized by efforts that focused on reasoning about interventions \citep{hagmayer2007causal,dasgupta2019causal} and counterfactuals \citep{morgan2015counterfactuals,oberst2019counterfactual}. 

The notion of causality has long been explored in the realm of probabilistic models \citep{oaksford2017causal,beckers2019abstracting} with a special focus on graphical models, called causal Bayesian networks (CBNs)~\citep{heckerman1995learning,neapolitan2004learning,pearl1995bayesian,acharya2018learning}. CBNs have widely been applied to infer causal relationships in high-impact diverse applications such as disease progression \citep{koch2017causal}, ecological risk assessment \citep{carriger2020bayesian} and more recently Covid-19 \citep{fenton2020covid,feroze2020forecasting} to name a few. Although successful, classical CBN models are difficult to scale and also suffer from the problem of intractable inference. Recently, tractable probabilistic models 
such as probabilistic sentential decision diagrams \citep{kisa2014probabilistic} and sum-product networks  \citep{poon2011sum} have emerged, which guarantee that conditional marginals can be computed in time linear in the size of the model. While weaving in the notion of interpretability, the computational view on probabilistic models allows one to exploit ideas from deep learning and can thus be very useful in modelling complex problems.

\begin{figure}[t]
  \centering
  \includegraphics[width=\textwidth]{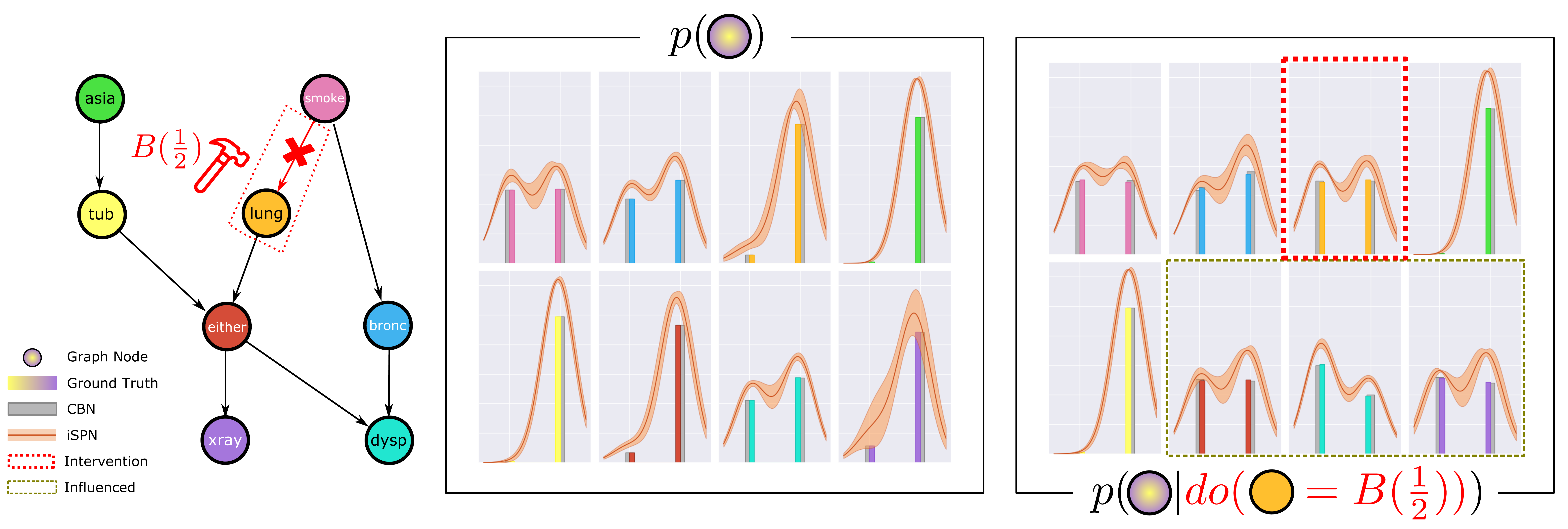}
  \caption{\textbf{Capturing interventional distributions using iSPN}. The interventional distributions for the ASIA data set using a causal Bayesian network (CBN, small-scale gold standard, gray bars) as well as an interventional SPN (iSPN) by intervening on $\mbox{\it lung}$. The iSPN is sensible to all the influences of the given intervention onto the system i.e., subsequent effects in the causal hierachy. \footnotesize{(Best viewed in color.)}
  }\label{fig:example}
  \vspace{-0.2in}
\end{figure}

Recently, there has been an effort to take advantage of this tractability to reason for causality. \citet{zhao2015relationship} showed how to compile back and forth between sum-product networks (SPNs) and Bayesian networks (BNs). Although this opened up a whole range of possibilities for tractable causal models, \cite{papantonis2020interventions} argued that such conversion leads to degenerated BNs thereby rendering it ineffective for causal reasoning. For the considered compilation of SPNs to BNs, this is indeed the case since a bipartite graph between the hidden and observed variables loses the relationships between the actual variables. Thus, either a new compilation method for transforming between tractable and causal model or alternatively a method to condition the probabilistic models directly on the $do$-operator to obtain interventional distributions, $P(y|\doop(x))$, is being required.

Here, we consider the latter strategy and extend the idea of conditionally parameterizing SPNs \citep{shao2019conditional} by conditioning on the $do$-operator while predicting the complete set of observed variables therefore capturing the effect of intervention(s). The resulting \textit{interventional sum-product networks (iSPNs}) take advantage of both the expressivity, due to the neural network, and the tractability, due to the SPN in order to capture the interventional distributions faithfully.
This shows that the dream of tractable causal models is not insurmountable, since iSPNs are causal. \cite{pearl2019seven} defined a three-level causal hierarchy that separates association (purely statistical level 1) from intervention (level 2) and counterfactuals (level 3), and argued that the latter two levels involve causal inference. iSPNs are a modelling scheme for arbitrary interventional distributions. They thus belong to level 2 and, in turn, are causal. So, while SPNs are not universal function approximators and the use of gate functions turns them into universal approximators, we go one step ahead and make the first effort towards introducing causality to SPNs without the need for compilation to Bayesian networks, as the functional approximator subsumes the $do$-operator. Fig. \ref{fig:example} shows an example of the effectiveness of iSPNs to capture interventional distributions on the ASIA data set. Our extensive experiments against strong baselines demonstrate that our method is able to capture ideal interventional distributions. 

To summarize, we make the following contributions:
\begin{enumerate}
    \item We introduce iSPNs, the first method that applies the idea of tractable probabilistic models to causality without the need for compilation and that accordingly generate interventional distributions w.r.t.\ the provided causal structure.
     \item We formulate the inductive bias necessary for turning conditional SPN into iSPN while taking advantage of the neural network modelling capacities within the gating nodes for  for modelling interventional distributions while capturing all influences within the given intervention (i.e., consequences propagated through the structural hierarchy).
    \item We show that, by construction, iSPNs can identify any interventional distribution permitted by the underlying structural causal model due to the inducted bias on the interface modalities in junction with the universal function approximation of the underlying gated SPN.
\end{enumerate}

We proceed as follows. We start by reviewing the basic concepts required and related work, namely the tractable model class of sum-product networks and key concepts from causality. Then we motivate using a newly curated causal model themed around personal health. Subsequently, we introduce iSPNs formally and prove that by construction they are capable of approximating any $do$-query (given corresponding data). Before concluding, we present our experimental evaluation in which we challenge iSPN in its density estimation and causal inference capabilities against various baselines. We make our code publically available at: \url{https://github.com/zecevic-matej/iSPN}.

\section{Background and Related Work}\label{sec:prelims}

Let us briefly review the background on tractable probabilistic models and causal models used in subsequent sections for developing our new model class based on CSPNs that allow for identifying causal quantities i.e., interventional distributions.  

{\bf Notation.} We denote indices by lower-case letters, functions by the general form $g(\cdot)$, scalars or random variables interchangeably by upper-case letters, vectors, matrices and tensors with different boldface font $\mathbf{v}, \mathbf{V}, \mathbf{\mathsf{V}}$ respectively, and probabilities of a set of random variables $\mathbf{X}$ as $p(\mathbf{X})$.

{\bf Sum-Product Networks (SPNs).} Introduced by \cite{poon2011sum}, generalizing the notion of network polynomials based on indicator variables $\lambda_{X=x}(\mathbf{x})\in[0,1]$ for (finite-state) RVs $\mathbf{X}$ from \citep{darwiche2003differential}, Sum-Product Networks (SPNs) represent a special type of probabilistic model that allows for a variety of exact and efficient inference routines. Generally, SPNs are considered as directed acyclic graphs (DAG) consisting of product, sum and leaf (or distribution) nodes whose structure and parameterization can be efficiently learned from data to allow for efficient modelling of joint probability distributions $p(\mathbf{X})$. Formally a SPN $\mathcal{S} = (G, \mathbf{w})$ consists of non-negative parameters $\mathbf{w}$ and a DAG $G=(V,E)$ with indicator variable $\pmb{\lambda}$ leaf nodes and exclusively internal sum and product nodes given by,
\begin{align}\label{eq:spn}
    \mathsf{S}(\pmb{\lambda}) &= \sum_{\mathsf{C}\in\ch(\mathsf{S})} \mathbf{w}_{\mathsf{S},\mathsf{C}} \mathsf{C}(\pmb{\lambda}) \quad \mathsf{P}(\pmb{\lambda}) = \prod_{\mathsf{C}\in\ch(\mathsf{S})} \mathsf{C}(\pmb{\lambda}),
\end{align} where the SPN output $\mathcal{S}$ is computed at the root node ($\mathcal{S}(\pmb{\lambda})=\mathcal{S}(\mathbf{x})$) and the probability density for $\mathbf{x}$ is $p(\mathbf{x})=\frac{\mathcal{S}(\mathbf{x})}{\sum_{\mathbf{x}^{\prime}\in\mathcal{X}} \mathcal{S}(\mathbf{x}^{\prime})}$.
They are members of the family of probabilistic circuits \citep{van2019tractable}. A special class, to be precise, that satisfies properties known as completeness and decomposability. Let $\mathsf{N}$ denote a node in SPN $\mathcal{S}$, then
\begin{align}\label{eq:scope}
    \scope{\mathsf{N}} = \begin{cases}
                        \{ X\} &\text{if $\mathsf{N}$ is IV ($\lambda_{X=x}$)}\\
                        \bigcup_{\mathsf{C}\in\ch(\mathsf{N})} \scope{\mathsf{C}} &\text{else}
                        \end{cases}
\end{align} is called the scope of $\mathsf{N}$ and
\begin{align}\label{eq:spn-props}
    &\forall \mathsf{S}\in\mathcal{S}: (\forall \mathsf{C}_1,\mathsf{C}_2 \in \ch(\mathsf{S}): \scope{\mathsf{C}_1} = \scope{\mathsf{C}_2}) \\
    &\forall \mathsf{P}\in\mathcal{S}: (\forall \mathsf{C}_1,\mathsf{C}_2 \in \ch(\mathsf{S}): \mathsf{C}_1 \neq \mathsf{C}_2 \implies \scope{\mathsf{C}_1}\cap\scope{\mathsf{C}_2}=\emptyset)
\end{align} are the completeness and decomposability properties respectively. Since their introduction, SPNs have been heavily studied such as by \citep{trapp2019bayesian} that present a way to learn SPNs in a Bayesian realm whereas \citep{kalra2018online} learn SPNs in an online setting. Several different types of SPNs have also been studied such as Random SPN \citep{peharz2020random}, Credal SPNs \citep{levray2019learning} and Sum-Product-Quotient Networks \citep{sharir2018sum}) to name a few. For more details readers are referred to the survey of \cite*{paris2020sum}. On another note, Gated or Conditional SPNs (CSPNs) are deep tractable models for estimating multivariate, conditional probability distributions $p(\mathbf{Y} | \mathbf{X})$ over mixed variables $\mathbf{Y}$ \citep{shao2019conditional}. They introduce functional gate nodes $g_i(\mathbf{X})$ that act as a functional parameterization of the SPN's information flow and leaf distributions given the provided evidence $\mathbf{X}$. 

{\bf Causal Models.} A Structural Causal Model (SCM) as defined by \cite{peters2017elements} is specified as $\mathfrak{C}:=(\mathbf{S},P_{\mathbf{N}})$ where $P_{\mathbf{N}}$ is a product distribution over noise variables and $\mathbf{S}$ is defined to be a set of $d$ structural equations
\begin{align}\label{eq:scm}
X_i := f_i(\pa(X_i),N_i), \quad \text{where} \ i=1,\ldots,d
\end{align}
with $\pa(X_i)$ representing the parents of $X_i$ in graph $G(\mathfrak{C})$. An intervention on a SCM $\mathfrak{C}$ as defined in (\ref{eq:scm}) occurs when (multiple) structural equations are being replaced through new non-parametric functions $\hat{f}(\widehat{\pa(X_i)},\hat{N_i})$ thus effectively creating an alternate SCM $\hat{\mathfrak{C}}$. Interventions are referred to as \textit{imperfect} if $\widehat{\pa(X_i)} = \pa(X_i)$ and as \textit{atomic} if $\hat{f} = a$ for $a \in \mathbb{R}$. An important property of interventions often referred to as "modularity" or "autonomy"\footnote{See Section 6.6 in \citep{peters2017elements}.} states that interventions are fundamentally of local nature, formally
\begin{align} \label{eq:autonomy}
p^{\mathfrak{C}}(X_i \mid \pa(X_i)) = p^{\hat{\mathfrak{C}}}(X_i \mid \pa(X_i))\;,
\end{align}
where the intervention of $\hat{\mathfrak{C}}$ occured on variable $X_k$ opposed to $X_i$. This suggests that mechanisms remain invariant to changes in other mechanisms which implies that only information about the effective changes induced by the intervention need to be compensated for. An important consequence of autonomy is the truncated factorization
\begin{align} \label{eq:truncatedfactorization}
p(V) = \prod\nolimits_{i\notin S} p(X_i\mid \pa(X_i))
\end{align}
derived by \cite{pearl2009causality}, which suggests that an intervention $S$ introduces an independence of an intervened node $X_i$ to its causal parents. 
Another important assumption in causality is that causal mechanisms do not change through intervention suggesting a notion of invariance to the cause-effect relations of variables which further implies an invariance to the origin of the mechanism i.e., whether it occurs naturally or through means of intervention \citep{pearl2016causal}.

A SCM $\mathfrak{C}$ is capable of emitting various mathematical objects such as graph structure, statistical and causal quantities placing it at the heart of causal inference, rendering it applicable to machine learning applications in marketing \citep{hair2021data}), healthcare \citep{bica2020time}) and education \citep{hoiles2016bounded}. A SCM induces a causal graph $G$, an observational/associational distribution $p^{\mathfrak{C}}$, can be intervened upon using the $\doop$-operator and thus generate interventional distributions $p^{\mathfrak{C};\doop(...)}$ and given some observations $\mathbf{v}$ can also be queried for interventions within a system with fixed noise terms amounting to counterfactual distributions $p^{\mathfrak{C\mid \mathbf{V}=\mathbf{v}};\doop(...)}$. To query for samples of a given SCM, the structural equations are being simulated sequentially following the underlying causal structure starting from independent, exogenous variables and then moving along the causal hierarchy of endogenous variables (i.e., following the causal descendants). 

The work closest to our work is by \cite{brouillard2020differentiable} although it solves the different problem of causal discovery. Causality for machine learning has recently gained a lot of traction \citep{scholkopf2019causality} with the study of both interventions \citep{shanmugam2015learning} and counterfactuals \citep{kusner2017counterfactual} gaining speed. For more details the readers are referred to \citep{zhang2018learning}.

\section{Interventional SPNs}\label{sec:i-SPN}
Now we are ready to develop interventional SPNs (iSPNs). To this end, we re-introduce the importance of adaptability of models to interventional queries and present a newly curated synthetic data set to both motivate and validate the formalism of iSPNs that, through over-parametric extension of SPNs, allows them to adhere to causal quantities.

\subsection{Adaptation to Causal Change}
\cite{peters2017elements} motivated the necessity of causality via the adequate generalizability of predictive models. Specifically, consider a simple regression problem $f(a)=b$ with data vectors $\mathbf{a},\mathbf{b} \in \mathbb{R}^k$ that are strongly positively correlated in some given region, e.g. $(1 < a < \infty, 1 < b < \infty)$. Now a query is posed outside the data support, e.g. $(0, f(0))$. As argued by \citeauthor{peters2017elements}, the underlying data generating processes can be an ambiguous causal process i.e., the data at hand can be explained by two different causal structures being either $A\rightarrow B$ or a common confounder with $A \leftarrow C \rightarrow B$. 

Assuming the wrong causal structure or ignoring it altogether could be fatal, therefore, any form of generalization out of data support requires assumptions to be made about the underlying causal structure. We adopt this point of view and further argue that \emph{ignoring causal change(s) in a system, i.e., the change of structural equation(s) underlying the system, can lead to a significant performance decrease and safety hazards}\footnote{This extended notion of performance degeneration through ignorance to the underlying causality is prioritized in this paper.}. Therefore, it is important to account for distributional changes present in the data due to experimental (thus interventional) settings. 

Consequently, we consider the learning problem, where the given data samples have been generated by different interventions $\doop(\mathbf{U}_j = \mathbf{u}_j)$ in a common SCM $\mathfrak{C}$ while the induced mutilated causal graphs $G(\mathfrak{C}, \doop(\mathbf{U}_j = \mathbf{u}_j))$ are assumed to be known, such that the trained model is capable of at least inferring all involved causal distributions $p(V(\mathfrak{C})\mid \doop(\mathbf{U}_j = \mathbf{u}_j))$ with $V$ being the variables.

\begin{figure*}[t]
  \centering
  \includegraphics[width=0.85\textwidth]{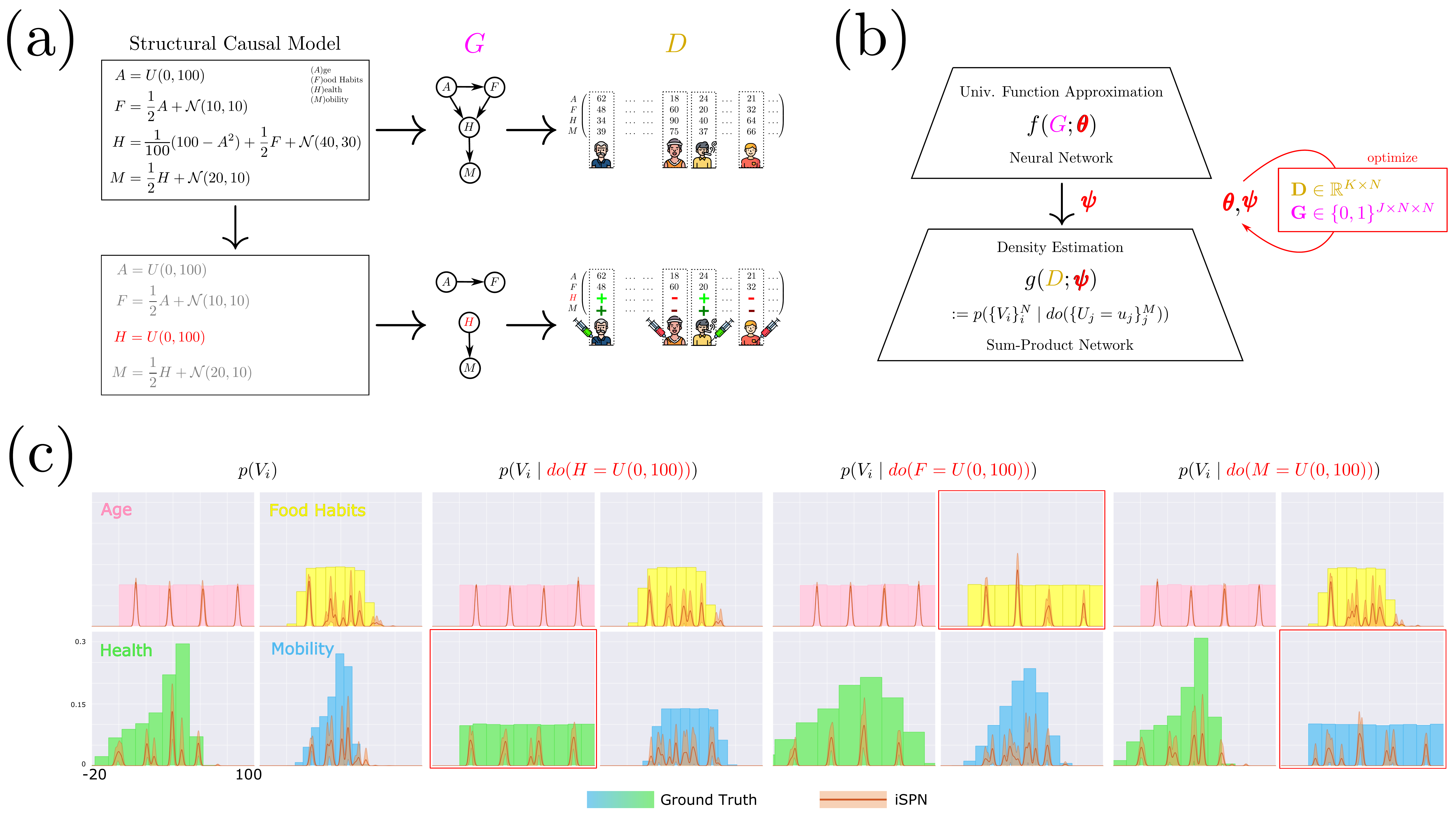}
  \caption{\textbf{An Overview of iSPN.} (a) The hidden process underlying the observable reality is modelled via a SCM that can be modified through interventions. The given SCM induces a causal graph and can generate data. An intervention can significantly alter the resulting data. (b) An over-parameterized density estimation framework using SPN is presented. The universal function approximator (here neural network) conditions on the mutilated causal graph and provides parameters to the SPN such that the given data's density can be modelled accordingly. The FA subsumes the $do$-operator. (c) Different causal queries are being presented. Furthermore, iSPN adapts to intervention-consequences.} \label{fig:Method}
  \vspace{-0.1in}
\end{figure*}

\subsection{Data Generating Process} \label{subsec:dgp}
To both validate and demonstrate the expressivity of interventional SPNs in modelling arbitrary interventional distributions, we curate a new causal data set based on the SCM $\mathfrak{C}$ presented in Fig.~\ref{fig:Method}(a), which we subsequently refer to as \textit{Causal Health} data set.
The SCM encompasses four different structural equations of the form $V_i = f(\pa(V_i), N_i)$, where $\pa(V_i)$ are the parents of variable $V_i$ and $N_i$ are the respective noise terms that form a factor distribution $P^{N_{1:N}}$ i.e., the $N_i$ are jointly independent. Now, the SCM $\mathfrak{C}$ describes the causal relations of an individual's health and mobility attributes with respect to their age and nutrition. 

Note that the Causal Health data set does not impose assumptions over the type of random variables or functional domains of the structural equations\footnote{Assumptions on the func.\ form of structural eq.\ are crucial for identification (Tab.~7.1 \citep{peters2017elements})}, which additionally constraints a learned model to adapt flexibly. While we generally do not restrict our method to any particular type of intervention, the following mainly considers perfect interventions as introduced in Sec.~\ref{sec:prelims}. 

Perfect interventions fully remove the causal mechanism of the parents of a given node, which is consistent with the idea behind randomized controlled trials (RCTs) where the given intervention randomizes the given specific node, often referred to as gold standard in causality related literature. We consider the special case of uniform randomization, i.e., uniform across the domain of the given variable\footnote{Note that for binary variables this amounts to Bernoulli $B(\frac{1}{2})$.}. An intervention performed on one node immediately changes the population and, thus, has a major effect on the generating processes of subsequent causal mechanisms in the respective causal sequence of events. 

To provide the reader with a concrete example of interventions within the causal health data set, consider the following: In concern of a virus infection the individuals of the Causal Health study should be vaccinated. The vaccine is expected to have side-effect(s), however, it has been poorly designed and has reached the population with the capability of completely changing the individual's health state. The observed changes do not show any form of pattern and are therefore assumed to be random. A young fit person could thus become sick, while an old person might feel better health wise. This sudden change will have an effect on the individual's mobility and also be independent of their age and nutrition. Such a scenario is mathematically being captured through $\doop(H=U(H))$ where $U(\cdot)$ is the uniform distribution over a given domain. 

\subsection{Introducing Interventional SPNs} \label{subsec:intro-ispn}
After motivating both the importance and the occurrences of interventions within relevant systems, we now start introducing interventional SPNs (iSPNs). 

{\bf Definition of iSPN.} As motivated in Sec.~\ref{sec:intro}, the usage of the compilation method from \citep{zhao2015relationship} for causal inference within SPN is arguably of degenerate\footnote{A bipartite graph in which the actual variables of interest are not connected is called degenerate.} nature given the properties of the compilation method \citep{papantonis2020interventions}. While the results of \cite{papantonis2020interventions} are arguably negative, there exists \emph{yet no proof of non-existence of such a compilation method} and as the authors point out in their argument for future lines of research in this direction, a model class extension poses a viable candidate for overcoming the problems of using SPN for causal inference. 

While agreeing on the latter aspect, we do not go the ``compilation road'' but extend the idea of conditional parameterization for SPN \citep{shao2019conditional} by conditioning on a modified form of the $\doop$-operator introduced by \cite{pearl2009causality} while predicting the complete set of observed variables. 

Mathematically, we estimate 
$p(V_i\mid \doop(\mathbf{U}_j=\mathbf{u}_j))$ by learning a non-parametric function approximator $f(\mathbf{G};\pmb{\theta})$ (e.g. neural network), which takes as input the (mutilated) causal graph  $\mathbf{G} \in \{0,1\}^{N \times N}$ encoded as an adjacency matrix, to predict the parameters $\pmb{\psi}$ of a SPN $g(\mathbf{D}; \pmb{\psi})$ that estimates the density of the given data matrix $\{\mathbf{V}_k\}^K_k = \mathbf{D} \in \mathbb{R}^{K \times N}$. With this, iSPNs are defined as follows:
\begin{defin}[Interventional Sum-Product Network]\label{def:ispn}
An interventional sum-product network (iSPN) is the joint model 
$m(\mathbf{G},\mathbf{D}) = g(\mathbf{D}; \pmb{\psi} = f(\mathbf{G};\pmb{\theta}))$, 
where $g(\cdot)$ is a SPN, $f(\cdot)$ a non-parametric function approximator and $\pmb{\psi} = f(\mathbf{G})$ are shared parameters. 
\end{defin} 
They are called interventional because we consider it to be a causal model given its capability of answering queries from the second level of the causal hierarchy \citep{pearl2019seven}, namely, that of interventions\footnote{The first level of the causal hierarchy ---association--- is considered to be purely statistical.}. The shared parameters $\pmb{\psi}$ allow for information flow during learning between the conditions and the estimated densities. Setting the conditions such that they contain information about the interventions, in the form of the mutilated graphs $\mathbf{G}$, effectively renders $f$ to subsume a sort of $\doop$-calculus in the spirit of truncated factorization shown in Eq.\eqref{eq:truncatedfactorization} i.e., the gate model acts as an estimand selector. Generally, we note that our formulation allows for different function and density estimators $f,g$. We choose $f$ to be a neural network for two reasons (1) their empirically established capability to act as causal sub-modules (e.g.\ \cite{ke2019learning} use a cohort of neural nets to mimic a set of structural equations, thus, a SCM) and (2) their model capacity being a universal function approximator, while we choose $g$ to be a SPN for its tractability properties for inference.

We have argued the importance of adaptability to interventional changes within the causal system and intend now to prove that iSPN are capable of approximating these different causal quantities.

\begin{prop}[Expressivity] \label{prop:express}
Assuming autonomy and invariance, an iSPN $m(\mathbf{G},\mathbf{D})$ is able to identify any interventional distribution $p_G(\mathbf{V}_i=\mathbf{v}_i \mid \doop(\mathbf{U}_j=\mathbf{u}_j))$, permitted by a SCM $\mathfrak{C}$ through interventions, with knowledge of the mutilated causal graph $\hat{G}$ and data $\mathbf{D}$ generated from the intervened SCMs by modelling the conditional distribution $p_{\hat{G}}(\mathbf{V}_i=\mathbf{v}_i \mid \mathbf{U}_j=\mathbf{u}_j)$.
\end{prop}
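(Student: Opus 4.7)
The strategy is to split the claim into two essentially independent parts and then glue them with the autonomy/invariance assumption. First, I would invoke the truncated factorization in Eq.~\eqref{eq:truncatedfactorization} to collapse every do-query into an observational conditional on a mutilated system: for each intervention $\doop(\mathbf{U}_j=\mathbf{u}_j)$ on $\mathfrak{C}$ with mutilated graph $\hat G = G(\mathfrak{C},\doop(\mathbf{U}_j=\mathbf{u}_j))$, autonomy (Eq.~\eqref{eq:autonomy}) together with invariance of the remaining mechanisms gives
\begin{equation*}
p_G\bigl(\mathbf{V}_i=\mathbf{v}_i \,\big|\, \doop(\mathbf{U}_j=\mathbf{u}_j)\bigr) \;=\; p_{\hat G}\bigl(\mathbf{V}_i=\mathbf{v}_i \,\big|\, \mathbf{U}_j=\mathbf{u}_j\bigr),
\end{equation*}
where the right-hand side is a purely statistical (level-1) conditional in the modified system. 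Thus identifying the full family of interventional distributions admitted by $\mathfrak{C}$ reduces to learning the single conditional map $(\hat G,\mathbf{u}_j)\mapsto p_{\hat G}(\mathbf{V}_i\mid\mathbf{U}_j=\mathbf{u}_j)$ from the supplied pooled data and the mutilated adjacency matrices that index it.

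Second, I would verify that the iSPN architecture can realise this conditional map to arbitrary precision. Following Definition~\ref{def:ispn}, $m(\mathbf{G},\mathbf{D})=g(\mathbf{D};\pmb{\psi}=f(\mathbf{G};\pmb{\theta}))$. The SPN component $g(\cdot;\pmb{\psi})$ is, by the completeness and decomposability properties in Eqs.~\eqref{eq:spn-props}, a valid probabilistic circuit whose induced mixture over the $\pmb{\lambda}$-leaves is dense in the space of joint densities over $\mathbf{V}$ when the number of sum/product nodes and the number of mixture components at the leaves are taken sufficiently large; this is the standard expressivity argument for SPNs with tractable marginals, and it fixes a parameter vector $\pmb{\psi}^{\star}_j$ for each target distribution $p_{\hat G_j}$. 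The gating function $f(\cdot;\pmb{\theta})$ is a neural universal function approximator on the compact domain of binary adjacency matrices $\{0,1\}^{N\times N}$, and hence for any finite collection of mutilated graphs $\{\hat G_j\}_j$ and any target parameter vectors $\{\pmb{\psi}^{\star}_j\}_j$ there exist weights $\pmb{\theta}$ such that $f(\hat G_j;\pmb{\theta})$ is arbitrarily close to $\pmb{\psi}^{\star}_j$.

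Composing the two ingredients then yields the claim: the map $\hat G\mapsto g(\cdot;f(\hat G;\pmb{\theta}))$ approximates $\hat G\mapsto p_{\hat G}(\mathbf{V}_i\mid \mathbf{U}_j)$ uniformly over the finite set of interventional regimes in the training data, and by the reduction in the first step this coincides with the interventional family $\{p_G(\mathbf{V}_i\mid\doop(\mathbf{U}_j=\mathbf{u}_j))\}_j$ permitted by $\mathfrak{C}$. A standard maximum-likelihood argument on the pooled dataset $\{(\hat G_j,\mathbf{D}_j)\}_j$ recovers the target parameters at the population limit, completing the identification claim.

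\textbf{Main obstacle.} The technical crux is not either universality statement in isolation but their composition. I expect the hard part will be arguing that the parameter manifold of the SPN is regular enough for $f$ to hit it in a stable way, i.e., that the map $\pmb{\psi}\mapsto g(\cdot;\pmb{\psi})$ is continuous and locally surjective onto the target family of densities so that small approximation errors in $f$ translate into small KL errors in the realised conditional. Additionally, the adjacency-matrix encoding must distinguish every relevant mutilated regime — which is immediate if different interventions always induce distinct edge sets, but which otherwise requires either an enriched encoding of $(\hat G,\mathbf{u}_j)$ as input to $f$, or an explicit argument that regimes with identical $\hat G$ share the conditional of interest by autonomy. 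Once these two regularity points are settled, the rest is bookkeeping over the two approximation theorems.
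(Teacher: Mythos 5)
Your proposal follows essentially the same two-step route as the paper's own proof: first reduce the do-query to an observational conditional in the mutilated system, $p_G(\mathbf{V}_i=\mathbf{v}_i\mid\doop(\mathbf{U}_j=\mathbf{u}_j))=p_{\hat G}(\mathbf{V}_i=\mathbf{v}_i\mid\mathbf{U}_j=\mathbf{u}_j)$, via autonomy/invariance and the $\doop$-calculus, and then invoke the expressivity of the SPN to approximate the resulting density, with the gate network's universal approximation supplying the graph-to-parameter map. The paper's proof is in fact far terser than yours --- it cites Pearl for the first step and Poon--Domingos for the second and stops there --- so the additional care you take over composing the two approximation results, and your observation that distinct interventions on the same variable can induce identical mutilated graphs (so the adjacency-matrix encoding alone may not separate all regimes), go beyond what the paper establishes rather than conflicting with it.
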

\begin{proof}
It follows directly from the definition of the $\doop$-calculus \citep{pearl2009causality} that $p_G(\mathbf{V}_i=\mathbf{v}_i \mid \doop(\mathbf{U}_j=\mathbf{u}_j)) = p_{\hat{G}}(\mathbf{V}_i=\mathbf{v}_i \mid \mathbf{U}_j=\mathbf{u}_j)$ where $\hat{G}$ is the mutilated causal graph according to the intervention $\doop(\mathbf{U}_j=\mathbf{u}_j)$ i.e., observations in the intervened system are akin to observations made when intervening on the system. Given the mutilated causal graph $\hat{G}$ (as adjacency matrix), the only remaining aspect to show is that the density estimating SPN can approximate a joint probability $p(\mathbf{X})$ using $\mathbf{D}$. This naturally follows from \citep{poon2011sum}.
\end{proof}
The expressivity of iSPN stems from both the capacities of gate function and the knowledge of intervention as well as availability of respective data. As an important remark, causal inference is often interested in estimating interventional distributions, i.e, causal quantities from purely observational models. Therefore, an alternative formulation to Prop.~\ref{prop:express} would be to replace the knowledge of the intervened causal structure $\mathbf{G}$ with knowledge on a valid adjustment set. In the following, we only consider the direct setting where actual interventional data from the system is assumed to be captured\footnote{For details on the remarked alternative formulation consider the Supplement.} thereby freeing the investigation of iSPN from the independent research around hidden confounding. 

{\bf Universal Function Approximation (UFA).} The gating nodes of CSPNs extend SPNs in a way that allows them to also induce functions which are universal approximators. For instance, using threshold gates $x_i \leq c \in \mathbb{R}$, one can realize testing arithmetic circuits \citep{choi2018relative} which have been proven to be universal approximators - rendering iSPN to be UFA by construction.

{\bf Learning of iSPN.} An interventional sum-product network is being learned using a set of mixed-distribution samples generated from simulating the Causal Health SCM for different interventions, where the observational case is considered to be equivalent to an intervention on the empty set. The parameters $\pmb{\theta},\pmb{\psi}$ of the iSPN describe the weights of the gate nodes and the distributions at the leaf nodes. The full model $m$ is differentiable if the provided gate function $f$ and each of the leaf models of $g$ are differentiable. Therefore, to train an iSPN, as depicted in Fig.~\ref{fig:Method}(b), it is sufficient to optimize the conditional log-likelihood end-to-end using gradient based optimization techniques. We assess the performance of our learned model through inspection of the adaptation of the model to the different interventions manifesting in the resulting marginals $p(V_i \mid \doop(U_j))$.

As can be observed in Fig. \ref{fig:Method}(c) or alternatively in Fig. \ref{fig:Experiments} (top row), the learned iSPN successfully adapts to both the interventions as well as its consequences. Considering for instance the intervention $p(V_i \mid \doop(F=B(\frac{1}{2})))$ which removes the edge $A\rightarrow F$ and thus renders Age ($A$) and Food Habits ($F$) independent. Given the drastic population change in $F$ and the fact that the Health ($H$) of an individual is causally dependent on both $A$ and $F$ a significant change in $H$ is being expected. Indeed, both $H$ and Mobility ($M$), being a causal child of $H$, broaden distribution wise and also these subsequent changes are captured correctly.\par

{\bf Causal Estimation.} The algebraic-graphical $\doop$-calculus \citep{pearl2009causality} is complete in that it can find estimands for any identifiable query in a finite amount of transformation. An SPN, and thereby also iSPN, is capable of providing estimates to said demands when given observational data ($\mathcal{L}_1$ on the PCH). Consider for instance the well-known non-Markovian "Napkin" graph $G=\{ W\rightarrow Z \rightarrow X\rightarrow Y, W\overset{*}{\leftrightarrow} X, W\overset{*}{\leftrightarrow}Y \}$ where $\overset{*}{\leftrightarrow}$ denotes a confounded relation. The causal effect is identified as $p(y{\mid}\doop(x))=(\sum_w p(x{\mid}w,z)p(w))^{-1}(\sum_w p(y,x{\mid}w,z)p(w))$ using the $\doop$-calculus where each of the r.h.s.\ components can be modelled by (i)SPN respectively. However, iSPN are also capable of directly expressing the causal quantity $p(y{\mid}\doop(x))$ as proven in Prop.\ref{prop:express}, similar to other neural-causal models like CausalGAN \citep{kocaoglu2019causalgan} or MLP-based NCM \citep{xia2021causal}.

{\bf Tractability.} Inference in BNs and Markov networks is at least an NP-problem and existing exact algorithms have worst-case exponential complexity \citep{cooper1990computational,roth1996hardness}, while SPNs can do inference in time proportional to the number of links in the graph. I.e., SPNs in general are able to compute any marginalization and conditioning query in time linear of the model’s representation size $r$, that is $O(r)$. We deploy simple neural network architectures for which the runtime for the forward-pass is that of matrix-multiplication i.e., the time complexity scales cubically in the size of the input $n$, that is $O(n^3)$. The gradient descent procedure that involves forward and backward passes, assuming $m$ gradient descent iterations, scales to $O(mn^3)$ which is the overall complexity we achieve during training phase. For SPN we deploy a random SPN structure which circumvents structure learning as such. Therefore, overall, training will generally be in the $O(mn^3r)$ regime while any causal query (that is, both $L_1$ observational and $L_2$ interventional in our case) will be answered within $O(n^3r)$. However, assuming that the weights of the random SPN were already initialized by the neural parameter-provider in a previous step, any causal query becomes answerable in $O(r)$. Since asking for inferences/queries $q$ within a single distribution ($q\rightarrow d\in L_i$) are more common than changes between distributions ($d,\hat{d}\subset L_i$), this linear complexity by our tractable model is being leveraged fully for performing causal inference.

{\bf Discussion.} To reconsider and answer the general question of why the modelling of a conditional distribution via an over-parameterized architecture is a sensible idea consider the following. One can represent a conditional distribution $p(\mathbf{Y}\mid \mathbf{X})$ by applying Bayes Rule to a joint distribution density model (e.g. a regular SPN) $p(\mathbf{Y}\mid \mathbf{X}) = \frac{p(\mathbf{Y}, \mathbf{X})}{p(\mathbf{X})}$. However, this assumes non-empty support i.e., $p(\mathbf{X}) > 0$. Furthermore, the joint distribution $p(\mathbf{Y}, \mathbf{X})$ optimizes \textit{all} possibly derivable distributions, diminishing single distr.~expressivity. Therefore, our considered formulation of a gate model allows for effectively subsuming the $do$-operator i.e., the gate model orchestrates the $do$ queries such that the density estimator can easily switch between different interventional distributions. While not introducing specific limitations, general CSPN limitation regarding OOD generalization are inherited.
\begin{figure}[!t]
\begin{minipage}{\textwidth}
\begin{minipage}[b]{0.64\textwidth}
\centering
\begin{adjustbox}{width=\textwidth,center}
\begin{tabular}{|c||*{4}{c|}}\hline
\backslashbox{Method}{Query}
&$V_1$&$V_2$&$V_3$
&$V_4$\\
	\hline
	\hline
	\textbf{iSPN} & $.001\pm.00$ & $.007\pm.01$ & $.003\pm.00$ & $.013\pm.01$\\
	\hline
	\textbf{MADE} & $.588\pm.59$ & $.108\pm.16$ & $.015\pm.02$ & $.105\pm.12$\\
	\hline
	\textbf{MDN} & $.178\pm.14$ & $.263\pm.14$ & $.184\pm.12$ & $.079\pm.01$\\
	\hline
\end{tabular}
\end{adjustbox}
  	\captionof{table}{\textbf{Jensen-Shannon-Divergence Evaluation of Estimated Interventional Distributions}. Numerical pendant to Fig.\ref{fig:Experiments}, mean and standard deviation per $p(V_{j\setminus i}\mid \doop(V_i=U(V_i)))$ where $U$ is the uniform distribution across all data sets.  \small{Lower=better.}
}\label{table:jsd}
\end{minipage}
\hfill
\begin{minipage}[b]{0.34\textwidth}
\centering
\includegraphics[width=0.6\textwidth]{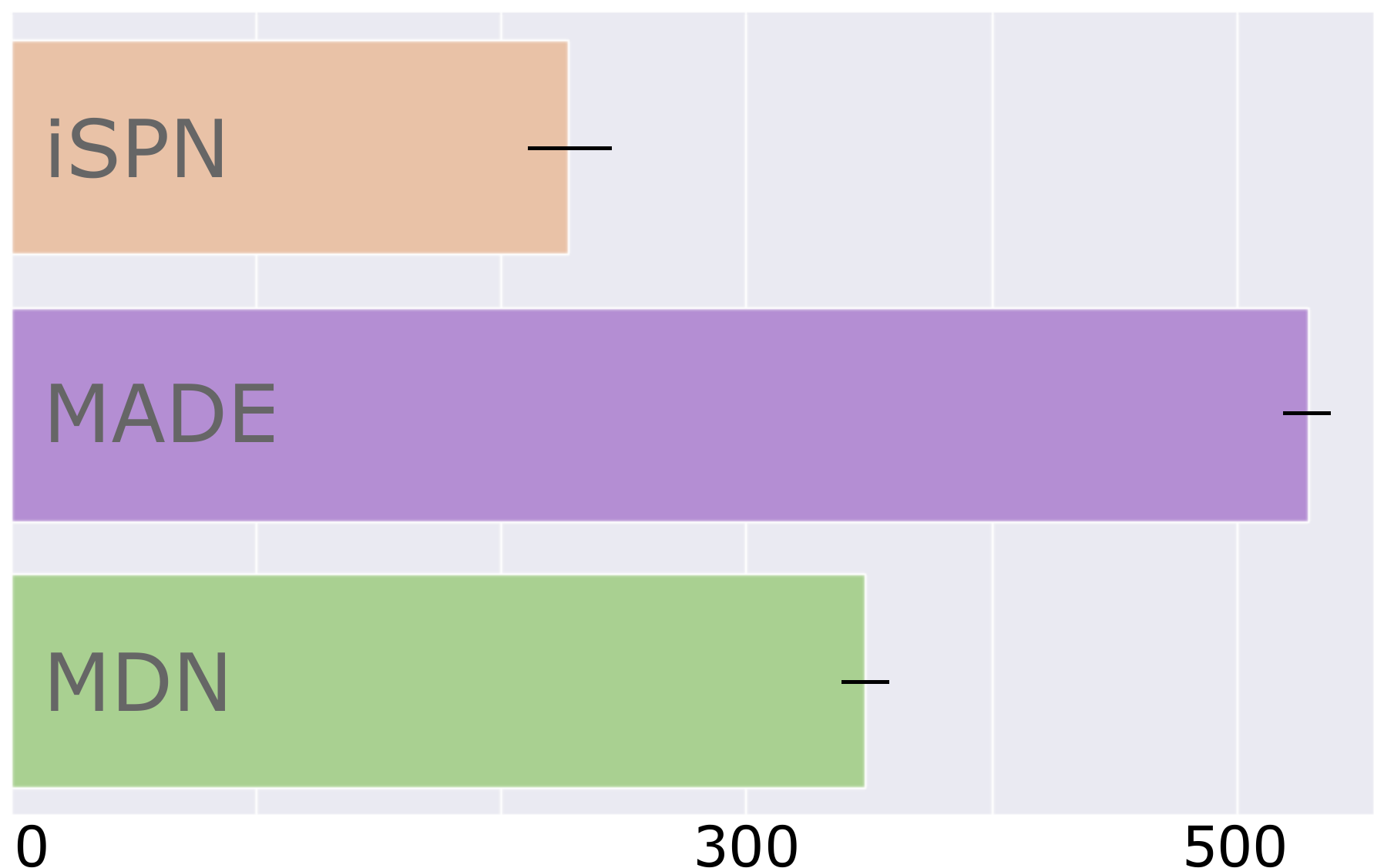}
\captionof{figure}{\textbf{Mean Running Times in sec. till convergence (Causal Health)} for 50 full passes. More data sets results in supplementary.
}\label{fig:runtime}
\end{minipage}
\end{minipage}
\end{figure}

\section{Experimental Results}\label{sec:empirics}
The assumptions made in causality usually require control over the data generating process which is almost never readily available in the real world. This amounts to scarcity of the available public data sets and also their implications for transfers to the real world and even when available, they are usually artificially generated as some causal extension of a known, pre-existing data set (e.g. MorphoMNIST data set introduced by Castro et al. [\citeyear{castro2019morpho}]). While it is difficult to consider real-world-esque experimental settings for causal models, we do not restrict our investigations of iSPN to rather specific problem settings like certain noise or decision variable instances (which is common in causal inference). The evaluation is performed on data sets with varying number of variables and in both continuous and discrete domains. For the introduced causal health data set, we even consider arbitrary underlying noise distributions. We have made our code repository publicly available\footnote{\url{https://github.com/zecevic-matej/iSPN}}.

{\bf Data Sets.} We evaluate iSPNs on four data sets. Three benchmarks: ASIA (A) \citep{lauritzen1988local} with 8 variables, Earthquake (E) with 5 variables and Cancer (C) \citep{korb2010bayesian} with 5 variables. One newly curated synthetic causal health (H) data set with 4 variables. More information about the data sets is presented in the Supplement.

\afterpage{
\begin{figure*}[t]
  \centering
  \includegraphics[width=0.85\textwidth]{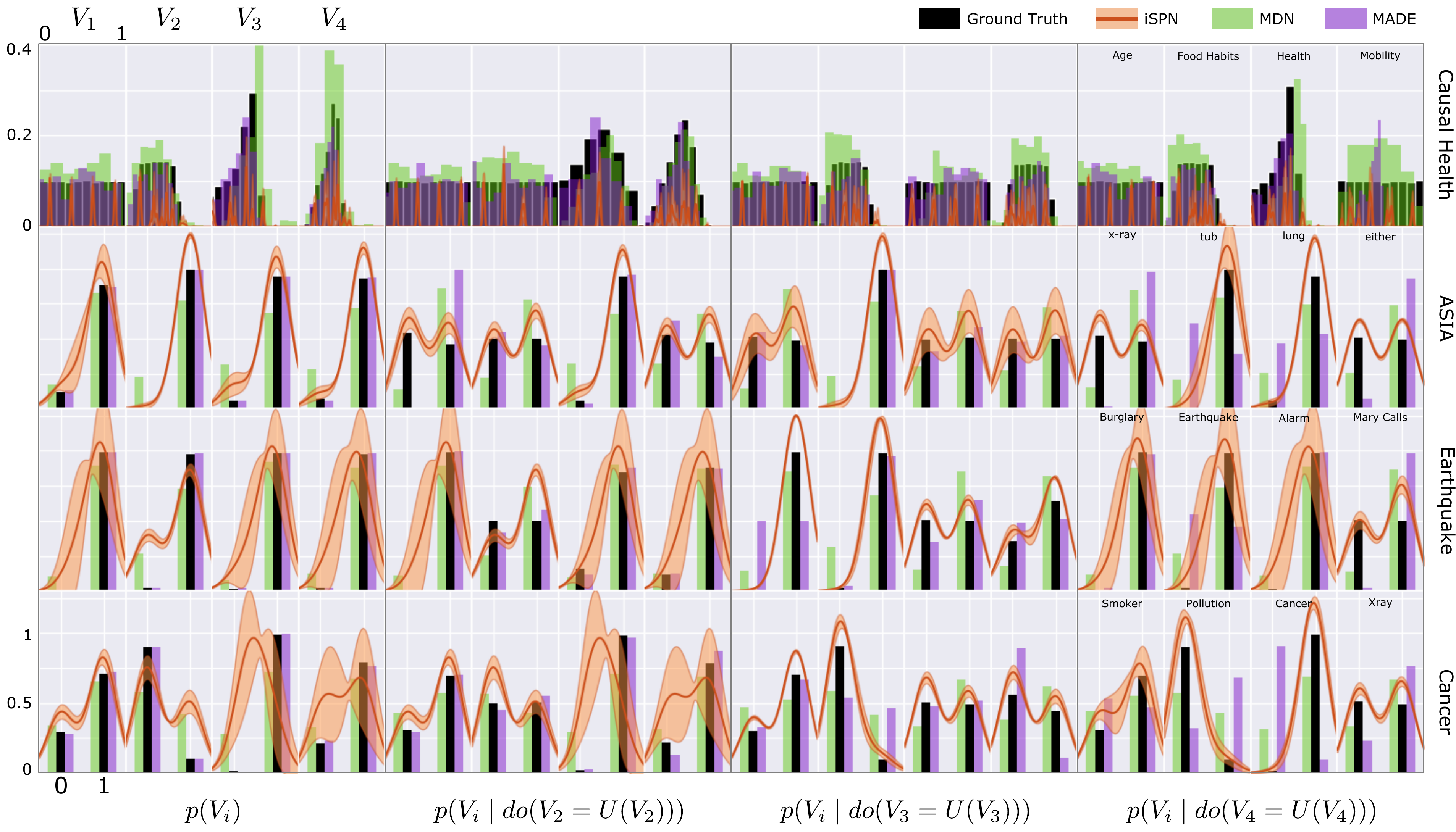}
  \caption{\textbf{Generative Baselines.} A comparison to the ground-truth (via underlying SCM) and competing estimated distributions. Each row represents a data set and each column represents a variable for a given causal query. \small{( Best in color.)}
  }\label{fig:Experiments}
  \vspace{-0.2in}
\end{figure*}}

{\bf Baselines.} For \textit{generative} capacities, we compare our method against Mixture Density Networks (MDN) \citep{bishop1994mixture} and Masked Autoencoder for Density Estimation (MADE) \citep{germain2015made}. Both methods are expressive, parametric neural network based approaches for density estimation. Generally, the causality for machine learning literature suggests a strong favor for neural based function approximators for modelling causal mechanisms \citep{ke2019learning}. For \textit{causal} capacities, we compare our method against the renown causal baselines from CausalML \citep{chen2020causalml} and DoWhy \citep{sharma2020dowhy} for the modelling of average treatment effects (ATE) \citep{pearl2009causality,peters2017elements} considered to be a gold standard task within causal inference.

{\bf Protocol and Parameters.} To account for reproducibility and stability of the presented results, we used learned models for five different random seeds per configuration. For means of visual clarity, the competing baselines MDN and MADE only present the best performing seed while iSPN is being presented with a mean plot and the standard deviation. The CBN, which performs exact inference according to the $\doop$-calculus (while best performing) has to be considered as gold standard and is therefore not part of the visualization. Furthermore, as it cannot compare in terms of feasibility. The considered interventions were of uniform nature. Each block of four columns represents a variable being randomized uniformly. We deployed a RAT-SPN \citep{peharz2020random} selecting the leaf node distributions to be Gaussian distributions. For further experimental details consider the Supplement.

Our empirical analysis investigates iSPN for the following questions: \textbf{Q1}: How is the estimation quality for interv.\ distributions? \textbf{Q2}: How important is the model's capacity? \textbf{Q3}: How is the runtime performance? \textbf{Q4/5}: How is the performance relative to state-of-the-art generative and causal models? \textbf{Q6}: How does the model adapt to different types of interventions?  

\textbf{(Q1. Precise Estimation of $do$-influenced variables)} The density functions learned by iSPN (see Fig. \ref{fig:Experiments}) fit with a high degree of precision as visualized by the difference in the peak of the modes of the learned distribution and that of the ground truth. While our visual analysis is arguably the superior method of evaluation as it conveys information about how the distributions of interest compare (which is feasible due to marginal inference and the locality of interventions within a SCM), we have also considered numerical evaluation criteria like the Jenson-Shannon-Divergence on which (as visually confirmed) iSPN outperforms the baselines (see Tab.\ref{table:jsd}). For more detailled observation and interpretation consider the Supplement.

\textbf{(Q2. Capacity Ablation Study)} We test the robustness of iSPN as the size of the associated SPN $g(\mathbf{D}; \pmb{\psi})$ is varied. We obtain 5 different iSPNs for each of the 4 data sets by using 5 different numbers of sum node weights, 600, 1200, 1800, 2400, 3200, effectively changing the capacity of the parameter-sharing neural network $f(\mathbf{G};\pmb{\theta})$. We observe iSPNs to be robust to varying hyper-parameters that control the size of the SPN $g(\cdot)$ and effectively the complexity of the associated function approximator $f(\cdot)$. More details and figures are in the Supplement.

\textbf{(Q3. Comparison of running times)} SPNs are tractable by design as long the networks size is polynomial in the input size. The superiority in running time also becomes apparent during training on the same (Causal Health) data set where we observe the mean run times over 50 passes of the whole data set to be significantly faster than competing methods (see Fig.\ref{fig:runtime}).

\textbf{(Q4. Comparison to Generative models)} We compare the performances in terms of precision of fit of the learned distributions as well as the flexibility of the models. iSPN outperforms the baselines across all 4 data sets both in precision of the fit of the learned density functions and flexibility of adaptation to the different interventions as seen in figure \ref{fig:Experiments}. In our experiments, MDN had the worst performance with estimated densities being consistently and significantly different from the ground truth, as the model settled for an average distribution across all interventions. MADE is able to estimate to high precisions but showed to be generally inconsistent across the experimental settings.

\textbf{(Q5. Comparison to Causal models)} We compare the numerical results as seen in figure \ref{fig:ATE} and observe that iSPN matches the performance of the causal baselines. The simple regressor employed by CausalML fails in the confounding case as it estimates wrongly the conditional, both DoWhy and iSPN can handle even the more difficult Simpson's paradox \citep{simpson1951interpretation} scenario. An analytical derivation for the ATE is exampled in the Supplement.

\textbf{(Q6. Different Types of Intervention)} By construction, iSPNs are capable to handle arbitrary interventions and our empirical results corroborate this impression. Fig.\ref{fig:interventions} shows the training of multiple models on different interventions alongside two example interventions and their appearance in the marginal distributions. Depending on the training setup, e.g.\ how many different interventional distributions need be learned, any single intervention might become more easily optimizable since the model can exploit similarities between distributions. For a more detailled elaboration and also visualizations of other interventional distributions (including atomic and non-continuous interventions), we direct the reader to our extensive supplementary material.
\begin{figure*}[t]
    \centering
    \includegraphics[width=0.85\textwidth]{{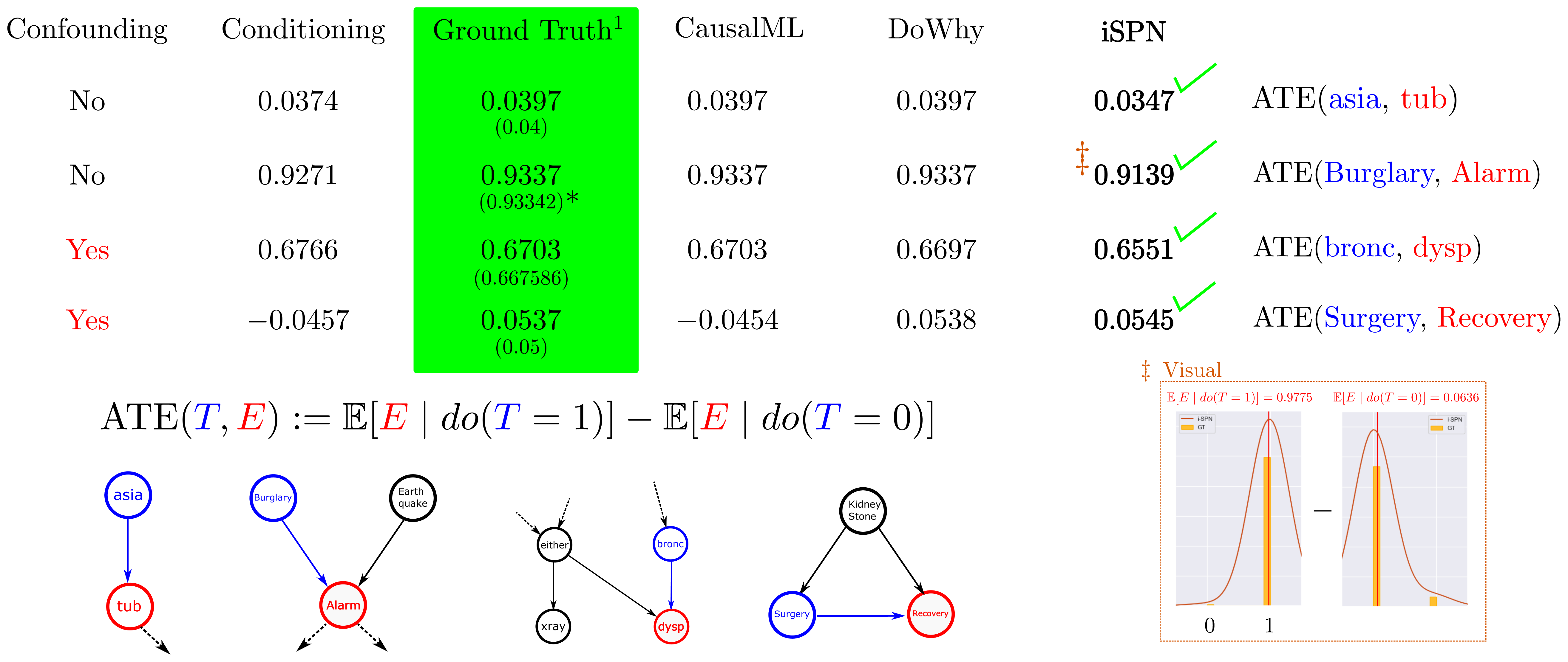}}
    \caption{\textbf{Causal Baselines.} 
    Different causal structures and corresponding causal effect estimation methods (CausalML, DoWhy) are being compared against iSPN. When confounding is present, then conditioning becomes different from intervening $p(Y\mid X) \neq p(Y\mid \doop(X))$ and iSPN correctly captures all evaluated cases. \small{(* are analytical solutions, $^1$ differences of means for actual interventional distributions, Best viewed in color.)}
    }
    \label{fig:ATE}
\end{figure*}
\begin{figure*}[t]
    \centering
    \includegraphics[width=0.85\textwidth]{{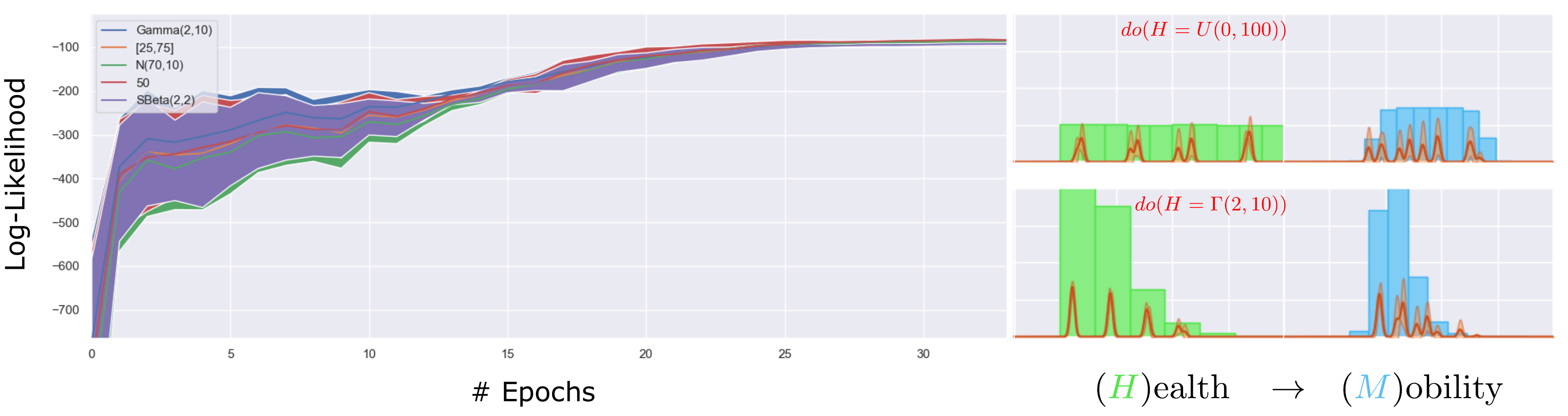}}
    \caption{\textbf{Adaptation to Different Interventions.} Training results for different kinds of interventions on the continuous CH data set. Left, the respective mean objective curves (log-likelihood), indicating consistent training and convergence for all three random seeds per configuration. Right, the (mean) density functions for two different interventions on $H$: Uniform $U(a,b)$ and Gamma $\Gamma(p,q)$ (other interventions shown in the supplementary). \small{(Best viewed in color.)}
    }
    \label{fig:interventions}
\end{figure*}

\section{Conclusions}\label{sec:conclusions}
We presented a way to connect causality with tractable probabilistic models by using sum-product networks parameterized by universal function approximators in the form of neural networks. We show that our proposed method can adapt to the underlying causal changes in a given domain and generate near perfect interventional distributions irrespective of the data distribution and the intervention type thereby exhibiting flexibility. Our empirical evaluation shows that our method is able to precisely estimate the conditioned variables and outperform generative baselines.

Finding a different compilation method for SPNs such as by making use of tree CBNs is important for learning pure causal probabilistic models. Testing our method on larger real world causal data sets is an interesting direction. Finally, using rich expert domain knowledge in addition to observational data is essential for causality and extending our method to incorporate such knowledge is essential. 

\clearpage
\begin{ack}
The authors thank the anonymous reviewers for their valuable feedback. This work was supported by the ICT-48 Network of AI Research Excellence Center “TAILOR" (EU Horizon 2020, GA No 952215) and by the Federal Ministry of Education and Research (BMBF; project “PlexPlain”, FKZ 01IS19081). It benefited from the Hessian research
priority programme LOEWE within the project WhiteBox, the HMWK cluster project “The Third Wave of AI.” and the Collaboration Lab “AI in Construction” (AICO). Furthermore, the authors gratefully acknowledge the support of 1R01HD101246 from NICHD and W911NF2010224 from US Army Research Office (ARO).
\end{ack}

\bibliography{ispn}

\clearpage 
\appendix

\section{Appendix - Interventional Sum-Product Networks: Causal Inference with Tractable Probabilistic Models}
We make further use of this supplementary section following the main paper to introduce some additional insights and results we deem important for the reader and for what has been examined in the main paper.

\subsection{Ablation Study on Arbitrary Intervention Realizations}
In the main paper we have mostly considered perfect interventions i.e., interventions that render the intervened variables and its causal parents independent, and especially uniformly randomization as interventions which are consistent in their nature with the idea behind RCTs that are often argued to be the gold standard in causality. However, as already suggested, our model is not restricted to any specific intervention type or instantiation. Fig. \ref{fig:Ablation_interv_types} (a) illustrates the performance of iSPN on the Causal Health data set for different intervention types (perfect, atomic), noise terms (Gaussian, Gamma, Beta) and instantiations (Indicator Functions, Modifications). As can be observed, the model successfully manages to model most interventional distributions and consequences adequately. Furthermore, we observe that the training curves remained consistent among different intervention types further advocating the adaptability of the model to interventions of more arbitrary nature. Nonetheless, it can be observed that some interventions are being modelled more precisely than others, e.g. consider the relatively better performance of the model on the non-standard Beta intervention $\doop(H=100 B(2,2))$ that creates a wide and symmetric distribution opposed to the indicator intervention $\doop(H=1_{[25,75]})$ that creates two heaps on $25$ and $75$. A possible explanation for this observation might lie in the fact that the model still learns other variants of distributions for a given variable, i.e. the different distributions the model adapts to on e.g. the Health $H$ variable are $p(H), p(H\mid \doop(F=f)), p(H\mid \doop(M=m))$ etc. and it can be argued that the non-standard Beta intervention is more consistent (that is, more similar) with these other marginal distributions of $H$ than it is with the unconventional distribution it learns for the indicator intervention. To summarize, the optimization problem becomes easier for the former.

\subsection{On the Visualization of Mean Densities} \label{sec:best_vs_mean}
Another notable aspect for further informing the results from the main paper is  to note that the visualization scheme, used in the main paper for assessing a given models performance, always considers learned models under multiple random seeds (i.e., their mean and standard deviation density functions) and never for a given single best seed. The natural motivation for the usage of multiple random seeds is to account for robustness and reproducibility within the results, however, a single best seed can significantly outperform the respective mean performance of a configuration, especially in regions of low data support which is not directly observable in the visualizations. Therefore, in Fig. \ref{fig:Ablation_best_vs_mean} we show an example for the atomic intervention $\doop(H=50)$. It can be observed that the single best seed fits the given ground truth perfectly without compromising for other learned distributions which does not become directly evident by observing the plotted mean performance. We argue that this is an important consideration to have in mind during visual inspection for adequate assessment of the observed results.

\subsection{Ablation Study on Function Approximator Capacity}
As within the main paper, we perform an ablation study on the consequences of increasing or decreasing the function approximator (neural network) capacity for training any given configuration of iSPN. Fig. \ref{fig:Ablation_size_others} visualizes the results for the remaining data sets not covered within the main paper i.e. Earthquake, Cancer and Causal Health. For Earthquake and Cancer data sets, we use 5 different number of sum node weights: 600, 1200, 1800, 2400 and 3200. For the synthetic causal health data set we use 300, 600, 1000, 1500, 2000. Each iSPN trained using these parameters is initialized over 5 random seeds. These values were chosen to test the performance of iSPN as they are increased/decreased compared to the optimal values of 2400 (600) for the public (synthetic) data sets. Interestingly, higher variances too, as seen in $p(V_3| \doop(V_1=U(V_1)))$, with $V_3$ being Xray and $V_1$ being Dyspnoea respectively, in the Cancer data set, are consistent across the 5 different iSPNs. The results are overall consistent with the presented i.e., the mean performance are consistent across the different neural network sizes while the variance can vary slightly on the presented data sets. 

\subsection{Data sets for Generative and Causal Inference Tasks}
Additional details on the four data sets considered for emprical evaluation of the generative and causal capabilities of iSPN in comparison to SotA methods:
\begin{table}[h]
	\centering
	\small
	\begin{tabular}{|c|c|c|c|}
		\hline
		\textbf{Data} & \textbf{\# of variables} & \textbf{\# of samples} & \textbf{\# of edges}\\
		\hline
		\hline
		Causal Health & 4 & 100,000 & 4\\
		\hline
		ASIA & 8 & 10,000 & 8 \\
		\hline
		Earthquake & 5 & 10,000 & 4 \\
		\hline
		Cancer & 5 & 10,000 & 4\\
		\hline
	\end{tabular}
	\caption{\textbf{Dimensions of the used data sets}. Edges refer to the edges in the causal graph associated with each data set. \footnotesize{The benchmarks stem from: \url{https://www.bnlearn.com/bnrepository/discrete-small.html}}
	}\label{table:datasets}
	\vspace{-0.1in}
\end{table}

\subsection{Additional Experimental Details, Observation and Interpretation}
Legend for Figure \ref{fig:Experiments}, data sets from top to bottom: H,A, E, C, variables from left to right: H: Age, Food Habits, Health, Mobility; A: Xray, Tub, Lung, Either; E: Burg., Earth., Alarm; C: Smoker, Poll., Cancer, Xray.
We trained iSPNs on 10,000 samples for each of the 3 public data sets and on 100,000 samples for the synthetic Causal Health data set. We chose the non-parametric function approximators $f(\cdot)$ for each iSPN to be a multi-layer perceptron (MLP). The inputs to the MLPs were mutilated causal graphs $\hat{G}$. The MLPs had 2 hidden layers consisting of 10 units each and used ReLU as their activation functions. The outputs were 2400 (600) weights corresponding to the sum nodes and 96 (12) weights corresponding to the leaf nodes of the SPN $g(\mathbf{D}; \pmb{\psi})$ for ASIA, Cancer and Earthquake (Causal Health) data sets. The MLPs were trained for 20 (130) epochs with a batch size of 100 (1000) and 5 different seeds.

Also, the learned distributions are similar across different seeds, with exception of some distributions, such as the marginal $p(V_4 | \doop(V_2 = U(V_2)))$ in the Cancer data set. A possible explanation for the observed higher variance is that the optimization trajectories during training for the different random seeds deviate with similar variance, stemming from the fact that different seeds select different initializations of the neural network parameters $\pmb{\theta}$ leading to different optimization steps and possibly local optima. A hint to this deviation of optimization trajectories might also be the observed discrepancy between the single best seed of a given model configuration and its mean performance across multiple random seeds as is being pointed out to the reader in the Appendix \ref{sec:best_vs_mean}. 

All of the experiments have been conducted on a MacBook Pro (13-inch, 2020, Four Thunderbolt 3 ports) laptop running a 2,3 GHz Quad-Core Intel Core i7 CPU with a 16 GB 3733 MHz LPDDR4X RAM on time scales ranging from seconds to (a few) hours with increasing size of the experiments.

\subsection{Alternative Formulation with an Adjustment Set}
The following is an alternative to Prop.\ref{prop:express} such that one exchanges the necessity of intervention with the knowledge on confounders if available.
\begin{prop}[Adjustment for Observational Models] \label{prop:adjustment}
Assuming autonomy and invariance, any interventional distribution $p_G(\mathbf{V}_i=\mathbf{v}_i \mid \doop(\mathbf{U}_j=\mathbf{u}_j))$ permitted by a SCM $\mathfrak{C}$ that induces a causal graph $G$ can be identified by adjusting for the confounders $\mathbf{w}$ within the observational model $\sum_{\mathbf{w}} p(\mathbf{V}_i=\mathbf{v}_i \mid \mathbf{U}_j=\mathbf{u}_j, \mathbf{W}=\mathbf{w})p(\mathbf{W}=\mathbf{w})$.
\end{prop}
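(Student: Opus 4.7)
The plan is to reduce Proposition~\ref{prop:adjustment} to Pearl's backdoor adjustment theorem and then invoke the representational capacity of the underlying SPN, exactly mirroring the structure of the proof of Proposition~\ref{prop:express}. First I would make explicit the (implicit) assumption that the set $\mathbf{W}$ is a valid adjustment set for the pair $(\mathbf{U}_j, \mathbf{V}_i)$ with respect to the causal graph $G$, i.e.\ that $\mathbf{W}$ satisfies the backdoor criterion: no node in $\mathbf{W}$ is a descendant of $\mathbf{U}_j$, and $\mathbf{W}$ blocks every path between $\mathbf{U}_j$ and $\mathbf{V}_i$ that contains an arrow into $\mathbf{U}_j$. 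Under autonomy (Eq.~\ref{eq:autonomy}) and invariance, Pearl's backdoor adjustment theorem then yields the identity
\begin{align*}
p_G(\mathbf{V}_i=\mathbf{v}_i \mid \doop(\mathbf{U}_j=\mathbf{u}_j))
= \sum_{\mathbf{w}} p(\mathbf{V}_i=\mathbf{v}_i \mid \mathbf{U}_j=\mathbf{u}_j, \mathbf{W}=\mathbf{w})\, p(\mathbf{W}=\mathbf{w}),
\end{align*}
which is precisely the right-hand side claimed in the proposition.

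Second, I would argue that the two observational quantities appearing on the right-hand side, namely the conditional $p(\mathbf{V}_i\mid \mathbf{U}_j,\mathbf{W})$ and the marginal $p(\mathbf{W})$, can each be represented by a (conditional) SPN. The conditional can be modelled by an iSPN $m(\mathbf{G},\mathbf{D})$ whose gate function is fed the \emph{observational} graph $G$ (not a mutilated one) and whose leaf/sum parameters estimate the joint on $(\mathbf{V}_i,\mathbf{U}_j,\mathbf{W})$; the marginal $p(\mathbf{W})$ is then a tractable query on the same model thanks to decomposability and completeness (Eqs.~\ref{eq:scope}--\ref{eq:spn-props}). Universal density approximation for SPNs, cited from \cite{poon2011sum} just as in Proposition~\ref{prop:express}, guarantees that both factors are arbitrarily well approximable given sufficient purely observational data.

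Third, combining the two steps, I would conclude by noting that substituting the SPN-estimated factors into the backdoor formula reconstructs the target interventional distribution without ever requiring an intervention to be physically performed; the summation over $\mathbf{w}$ is finite (or an integral in the continuous case) and hence preserves tractability up to the cardinality of $\mathbf{W}$. This shows that the iSPN framework admits a purely observational variant whenever a valid adjustment set is known, trading the need for interventional data against the structural knowledge of confounders.

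The main obstacle I anticipate is the cleanliness of the adjustment-set assumption: the statement of the proposition speaks of ``the confounders $\mathbf{w}$'' as if they were canonically defined, whereas strictly one needs a set satisfying the backdoor criterion (or more generally any sound adjustment criterion such as Shpitser's generalized adjustment). I would therefore spend a sentence in the proof making this precondition explicit, so that the reduction to Pearl's theorem is unambiguous, before invoking SPN expressivity for the remaining, essentially statistical, part of the argument.
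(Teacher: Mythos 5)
Your proposal is correct and lands on the same identity, but it reaches it by a somewhat different route than the paper. The paper does not cite the backdoor adjustment theorem as a black box; instead it derives the formula from scratch: it first applies the $\doop$-calculus definition to rewrite $p_G(\mathbf{V}_i \mid \doop(\mathbf{U}_j=\mathbf{u}_j))$ as the observational conditional $p_{\hat{G}}(\mathbf{V}_i \mid \mathbf{U}_j=\mathbf{u}_j)$ in the mutilated graph, then expands over $\mathbf{W}$ by the sum rule, factorizes by the chain rule, and finally replaces $p_{\hat{G}}$ by $p$ using autonomy and invariance. Your version is essentially the folded form of that same argument, and it has one genuine advantage: you make explicit the precondition that $\mathbf{W}$ must be a valid adjustment set (backdoor criterion). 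The paper's derivation silently needs exactly this --- the chain-rule step writes $p_{\hat{G}}(\mathbf{W}=\mathbf{w})$ where the chain rule literally gives $p_{\hat{G}}(\mathbf{W}=\mathbf{w}\mid\mathbf{U}_j=\mathbf{u}_j)$, which requires $\mathbf{W}$ to contain no descendants of $\mathbf{U}_j$, and the final invariance step requires $\mathbf{W}$ to block the backdoor paths; the paper only states $\mathbf{W}\cap\{\mathbf{V},\mathbf{U}\}=\emptyset$ inside the proof and defers the ``valid adjustment set'' discussion to the remark afterwards. Your second and third steps (representing $p(\mathbf{V}_i\mid\mathbf{U}_j,\mathbf{W})$ and $p(\mathbf{W})$ by SPNs and appealing to their approximation capacity) are sound but go beyond what the paper's proof of this proposition actually argues --- the paper treats the proposition purely as an identification statement and leaves the SPN-expressivity half to Proposition~\ref{prop:express}. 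In short: the paper buys self-containedness by unfolding the backdoor derivation; you buy precision about the hypotheses by citing the theorem and naming the criterion it needs.
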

 \begin{proof} Assume $\mathbf{W} \cap \{\mathbf{V}, \mathbf{U}\} = \emptyset$. Furthermore, $G, \hat{G}$ are again the original and intervened causal graph and $p=p_G$. Now, $p(\mathbf{V}_i=\mathbf{v}_i \mid \doop(\mathbf{U}_j=\mathbf{u}_j)) = p_{\hat{G}}(\mathbf{V}_i=\mathbf{v}_i \mid \mathbf{U}_j=\mathbf{u}_j)$
 \begin{align*}
 &= \sum\nolimits_{\mathbf{w}} p_{\hat{G}}(\mathbf{V}_i=\mathbf{v}_i, \mathbf{W}=\mathbf{w} \mid \mathbf{U}_j=\mathbf{u}_j) \\
 &= \sum\nolimits_{\mathbf{w}} p_{\hat{G}}(\mathbf{V}_i=\mathbf{v}_i \mid \mathbf{U}_j=\mathbf{u}_j, \mathbf{W}=\mathbf{w})p_{\hat{G}}(\mathbf{W}=\mathbf{w}) \\
 &= \sum\nolimits_{\mathbf{w}} p(\mathbf{V}_i=\mathbf{v}_i \mid \mathbf{U}_j=\mathbf{u}_j, \mathbf{W}=\mathbf{w})p(\mathbf{W}=\mathbf{w})\;.
 \end{align*}
The first equality follows, by definition, from $\doop$-calculus as argued in Prop.~\ref{prop:express}, i.e., the intervention amounts to the observation in the intervened system. The second and third equality are transformations according to the rules of probability theory: the second step follows the sum rule and the third step follows the chain rule. The last line follows from the assumptions of autonomy and invariance i.e., that interventions are local and invariant to whether they occur naturally or artificially.
 \end{proof}
 The set of variables $\mathbf{W}$ is called \textit{(valid) adjustment set} (see Def.~6.38 \citep{peters2017elements}) if it adjusts the observational setting such that the causal effect captured by the intervention becomes \textit{unconfounded}\footnote{Mathematically, the causal effect from $X$ to $Y$ is confounded if $p(Y\mid \doop(X)) \neq p(Y\mid X)$}. 
For the Causal Health SCM $\mathfrak{C}$ which induces the simple causal graph $A\rightarrow F, \{A,F\}\rightarrow H, H\rightarrow M$, the equivalence of Props.~\ref{prop:express} and \ref{prop:adjustment} depends on the inference query. While an intervention on $A$ is trivially unconfounded $p(H=h\mid \doop(A=a)) = p(H=h\mid A=a)$, an intervention on $F$ would require adjustment via e.g. $\{A\}$, that is, $p(H=h\mid \doop(F=f)) = \sum\nolimits_a p(H=h\mid F=f, A=a)p(A=a)$. 
 
 \subsection{Example of an Analytical Derivation for ATE}
 Consider the non-confounding case of the ATE or causal effect of a Burglary $(B)$ on an Alarm $(A)$ (where $Q$ is Earthquake) from the Earthquake data set. From the law of iterated expectations, it follows:
 \begin{align*}
&ATE(\text{Treatment=$B$}, \text{Effect=$A$})\\
&= \mathbb{E}[A\mid do(B=1)] - \mathbb{E}[A\mid do(B=0)] \\
&= \mathbb{E}_{Q}[\mathbb{E}[A\mid do(B=1), Q] - \mathbb{E}[A\mid do(B=0), Q]] \\
&= 0.99322 - 0.0598 = \mathbf{0.93342}
\end{align*}
with 
\begin{align*}
&\mathbb{E}_{Q}[\mathbb{E}[A\mid do(B=1), Q]] \\
&= \sum_e\mathbb{E}[A\mid do(B=1), Q=e]p(Q=e)\\
&=\sum_e p(A=1\mid do(B=1), Q=e)p(Q=e) \\
&=\sum_e p(A=1\mid B=1, Q=e)p(Q=e) \\
&= 0.71*0.02 + 0.999*0.98 = 0.99322
\end{align*}
and analogously for $\mathbb{E}_{Q}[\mathbb{E}[A\mid do(B=0), Q]]$. 

The ATE tells us that a burglary has a strong effect on the alarm to be triggered which is consistent with our human intuition as the alarm is specifically designed to trigger in an event of theft.
\clearpage
\subsection{Further Numerical Evaluation}
More numerical evaluation, using Jensen-Shannon-Divergence as the discrepancy measure for the simulated ground truth distribution against the estimates of iSPN and the generative competition. A lower score suggests a better approximation. We consider a cycle permuted experimental setup in which each experiment considers a different uniform intervention of the currently selected variable of choice. Generally, iSPN outperforms the baselines in its estimate precision under fair (i.e., similar neural capacities, same amount of data passes, same amount of random seeds under consideration): \vspace{2em}\\
\begin{tabular}{|c||*{3}{c|}}\hline
\backslashbox{Query}{Method}
&iSPN&MADE&MDN\\
	\hline
	\hline
	x-ray & 0.002 & 1.262 & 0.398\\
	\hline
	tub & 0.009 & 0.004 & 0.176\\
	\hline
	lung & 0.002 & 0.003 & 0.285\\
	\hline
	either & 0.009 & 0.022 & 0.089\\
	\hline
\end{tabular}
\begin{tabular}{|c||*{3}{c|}}\hline
\backslashbox{Query}{Method}
&iSPN&MADE&MDN\\
	\hline
	\hline
	Burglary & 0.000 & 1.083 & 0.182\\
	\hline
	Earthquake & 0.000 & 0.005 & 0.465\\
	\hline
	Alarm & 0.001 & 0.048 & 0.312\\
	\hline
	MaryCalls & 0.005 & 0.033 & 0.084\\
	\hline
\end{tabular}
\begin{tabular}{|c||*{3}{c|}}\hline
\backslashbox{Query}{Method}
&iSPN&MADE&MDN\\
	\hline
	\hline
	Smoker & 0.003 & 0.002 & 0.064\\
	\hline
	Pollution & 0.005 & 0.385 & 0.305\\
	\hline
	Cancer & 0.003 & 0.002 & 0.064\\
	\hline
	Xray & 0.001 & 0.305 & 0.066\\
	\hline
\end{tabular}
\begin{tabular}{|c||*{3}{c|}}\hline
\backslashbox{Query}{Method}
&iSPN&MADE&MDN\\
	\hline
	\hline
	Age & 0.000 & 0.003 & 0.069\\
	\hline
	Food Habits & 0.013 & 0.037 & 0.108\\
	\hline
	Health & 0.006 & 0.007 & 0.073\\
	\hline
	Mobility & 0.037 & 0.060 & 0.079\\
	\hline
\end{tabular}
    
\begin{figure*}[t]
    \centering
    \includegraphics[width=\textwidth]{{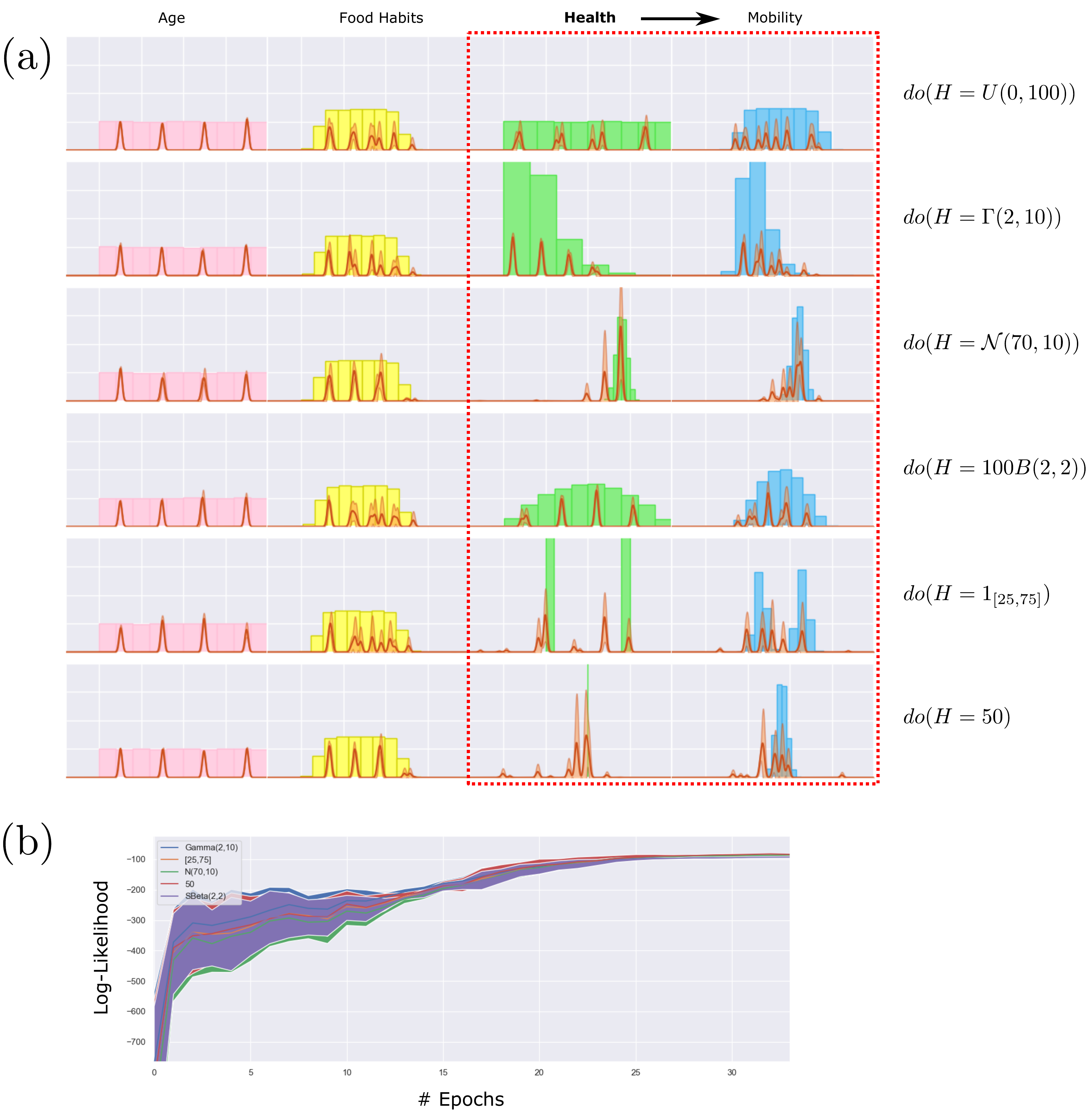}}
    \caption{\textbf{Ablation Study on Different Intervention Types, Noise Terms and Instantiations.} Training results for different kinds of interventions on the continuous Causal Health data set are being presented, with (a) presenting the learned (mean) density functions for a given intervention on $H$, where we consider different noise distributions normal $\mathcal{N}(\mu, \sigma^2),$ Gamma $\Gamma(p,q)$, and Beta $B(a,b)$ but also different modifications, e.g. the non-standard Beta distribution $(k-l)B(a,b)+l$ and intervention types, e.g. perfect interventions $\doop(H=a), a \in \mathbb{R}$. Below (b) shows the respective mean objective curves (log-likelihood), indicating consistent training and convergence on the continuous data set for all 3 seeds per configuration.
    }
    \label{fig:Ablation_interv_types}
\end{figure*}
\begin{figure*}[t]
    \centering
    \includegraphics[width=\textwidth]{{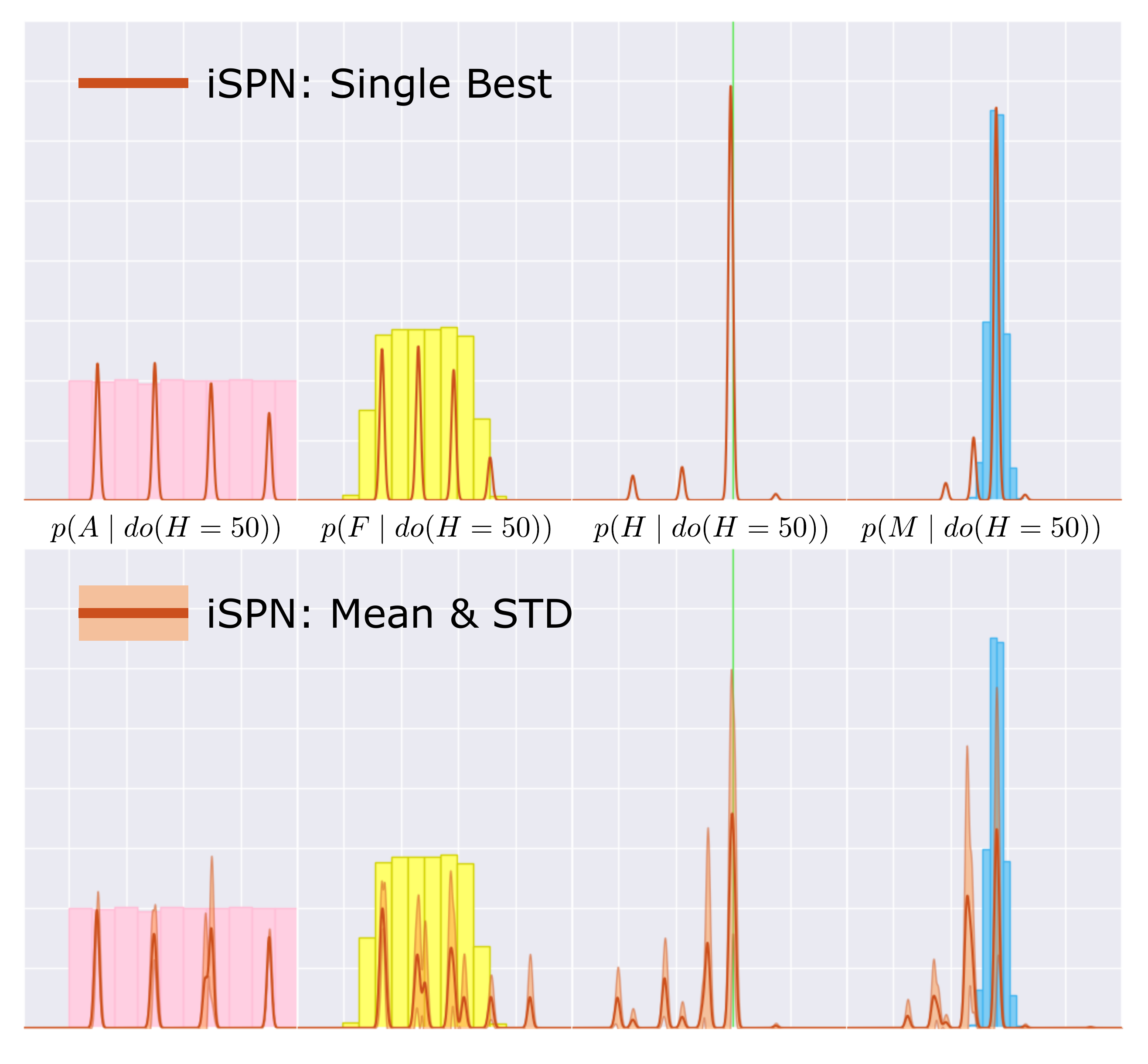}}
    \caption{\textbf{Single Best Density vs Mean Density.} 
    An example of the visual discrepancy between the performance of the single best seed and the mean performance across multiple seeds. As can be observed in the top row, the single best seed fits all the different marginal distributions accurately, while the mean performance of the given training setup and model architecture (presented in the bottom row) shows deviation especially on regions of low or none support i.e., consider $p(H\neq50\mid \doop(H=50)) > 0$ that have more emphasized peaks although ideally none (or insignificant ones, considerably noise, as for the single best seed) should be observed.
    }
    \label{fig:Ablation_best_vs_mean}
\end{figure*}
\begin{figure*}[t]
    \centering
    \includegraphics[width=\textwidth]{{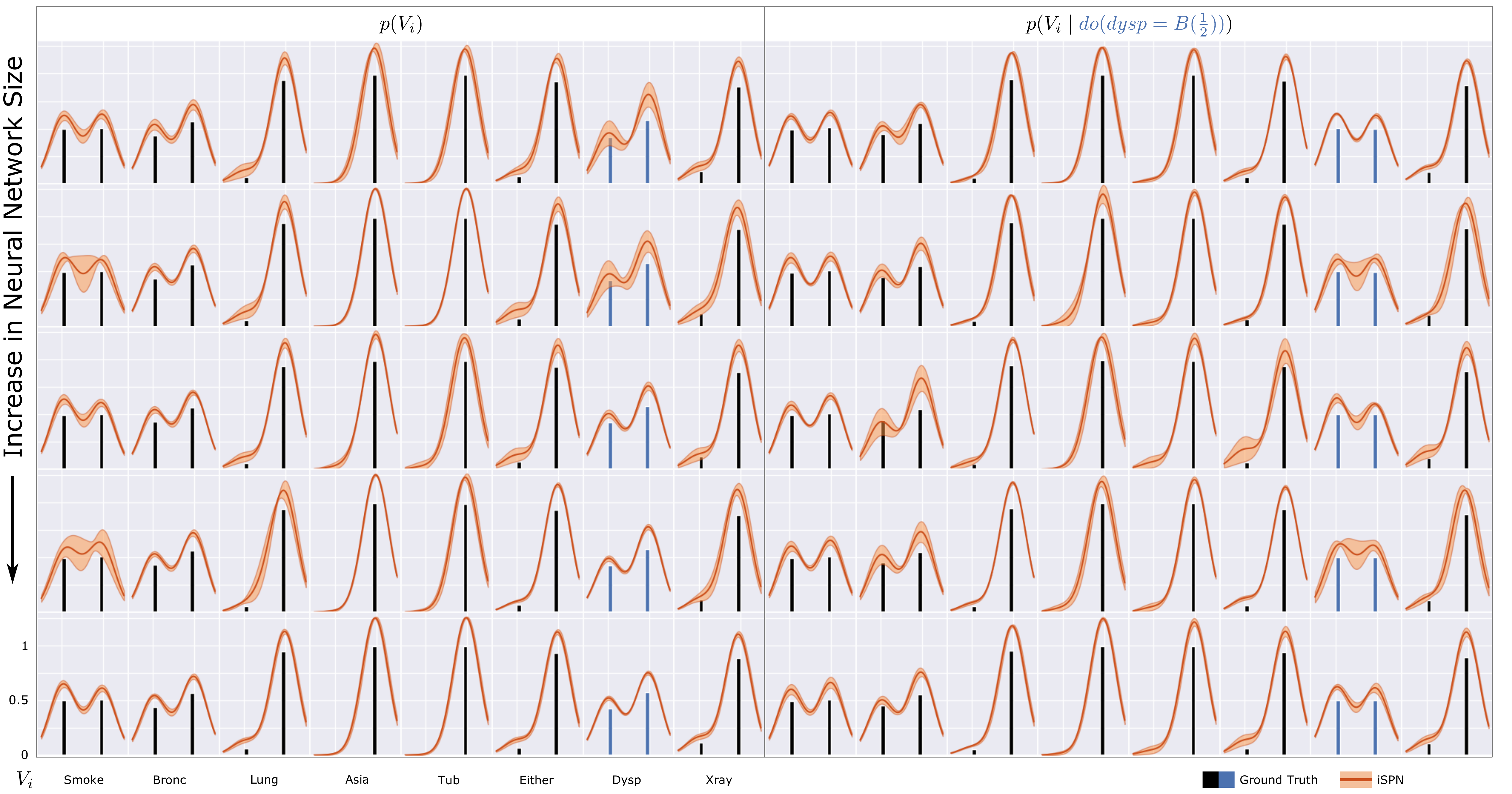}}
    \caption{\textbf{Effect of size of neural network.} We test robustness of iSPN by training with neural networks of 5 different parameter sizes on the ASIA data set. Each row represents neural network size, the first 8 columns represent the observational distributions of 8 variables and the next 8 columns represent the learned marginal distributions upon intervention with $\doop(\cdot)$.
    }
    \label{fig:Ablation_asia}
    \vspace{-0.2in}
\end{figure*}
\begin{figure*}[t]
    \centering
    \includegraphics[width=\textwidth]{{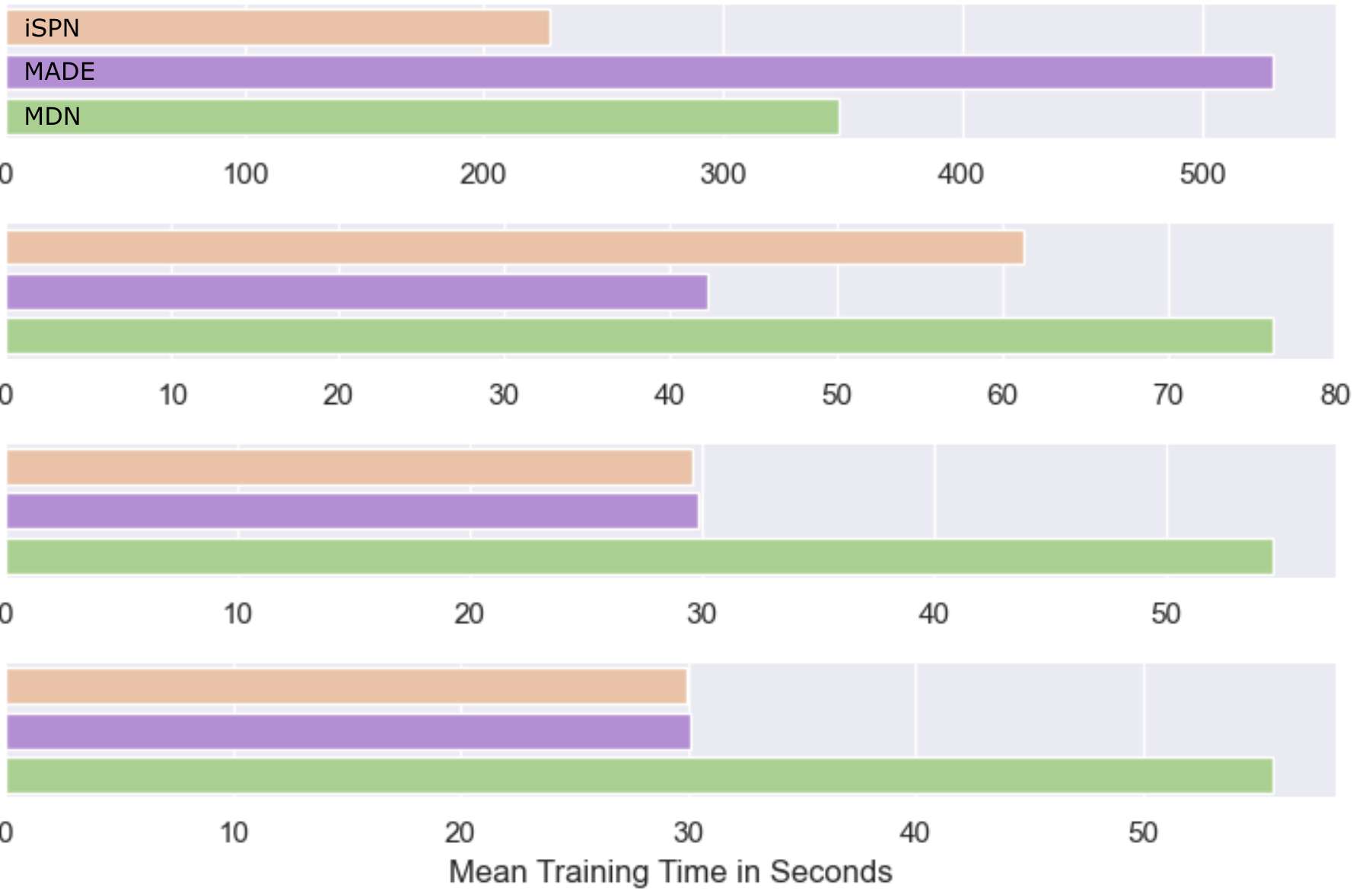}}
    \caption{\textbf{Runtimes over different data sets.} 
    From top to bottom, as in Fig.\ref{fig:Experiments}: Causal Health (H), ASIA (A), Earthquake (E), Cancer (C). On the binary data sets (A, E, C), MADE due to its single-model form is able to match iSPN and slightly improve upon iSPN, however, for the hybrid-model real-world-esque (H) the speed advantage of the over-parameterized models and especially iSPN becomes apparent. Important: note the scales on H, iSPN significantly outperforms the other models.
    }
    \label{fig:runtimes}
\end{figure*}
\begin{figure*}[t]
    \centering
    \includegraphics[width=\textwidth]{{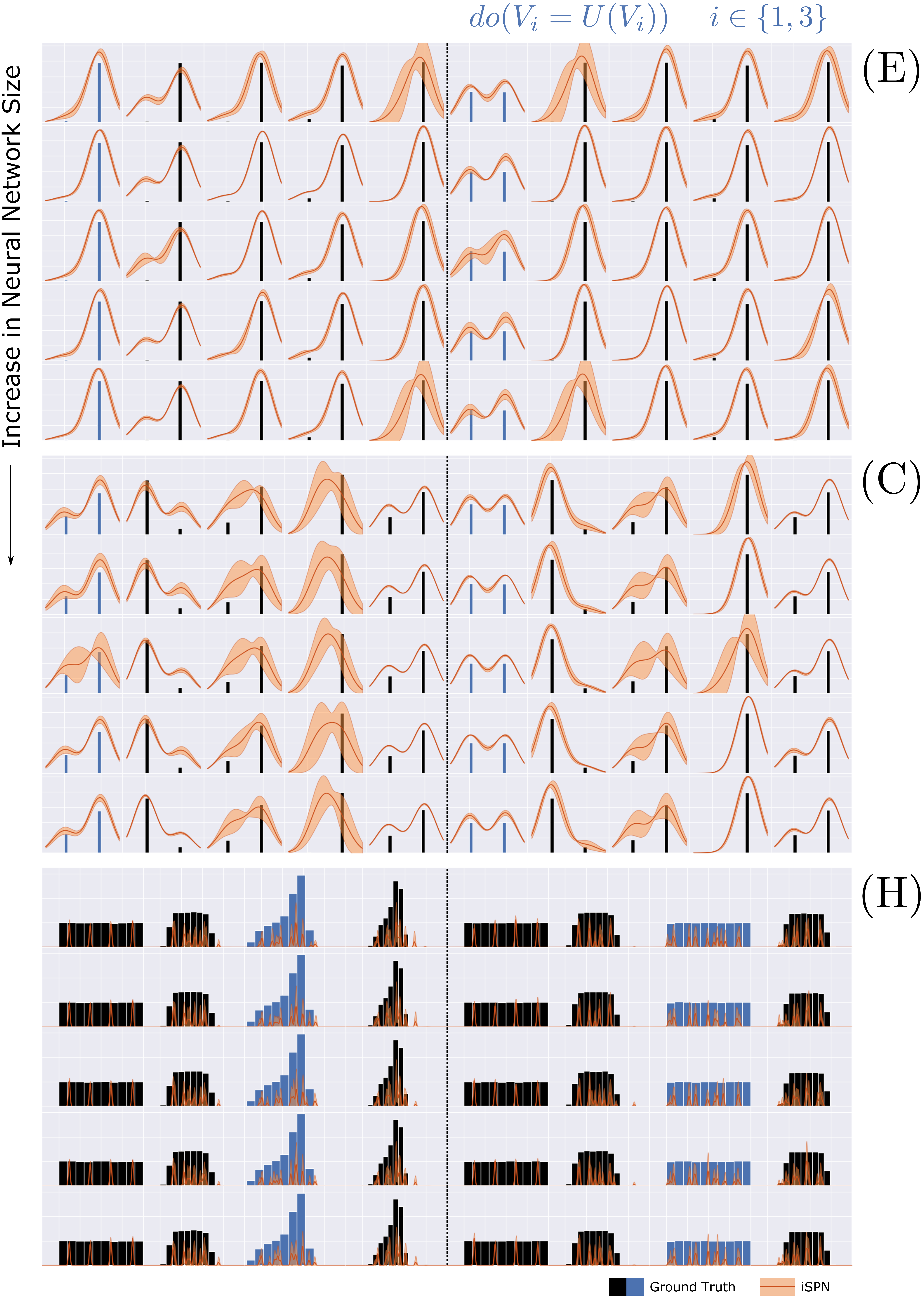}}
    \caption{\textbf{Ablation Study on Influence of Neural Network Size.} 
    A cohort of trained model configurations that only differ in the size of their respective function approximation modules (neural network) and their mean performances across the observational and one interventional setting (uniform intervention being Bernoulli $B(\frac{1}{2})$ for discrete and Uniform $U(V_i)$ for continuous variables) are being presented for the remaining data sets: Earthquake (E), Cancer (C), Causal Health (H). As can be observed, the mean performance stays mainly consistent across the different function approximation capacities while the variance can differ slightly. 
    }
    \label{fig:Ablation_size_others}
\end{figure*}

\end{document}